\newtheorem{thm}{Theorem}[section]
\newcommand{\mrd}{\mathrm{d}}
\title{An efficient likelihood-free Bayesian inference method based on sequential neural posterior estimation}
\author{Yifei Xiong$^{1}$, Xiliang Yang$^2$, Sanguo Zhang$^{3}$, Zhijian He$^2$\thanks{Corresponding author: hezhijian@scut.edu.cn}}
\date{%
    $^1$Department of Statistics, Purdue University, West Lafayette, United States\\
    $^2$School of Mathematics, South China University of Technology, Guangzhou, China \\
    $^3$School of Mathematical Sciences, University of Chinese Academy of Sciences, Beijing, China\\
}
\begin{document}

\maketitle
\begin{abstract}
Sequential neural posterior estimation (SNPE) techniques have been recently proposed for dealing with simulation-based models with intractable likelihoods. Unlike approximate Bayesian computation, SNPE techniques learn the posterior from sequential simulation using neural network-based conditional density estimators by minimizing a specific loss function. 
The SNPE method proposed by Lueckmann et al. (2017) used a calibration kernel to boost the sample weights around the observed data, resulting in a concentrated loss function. However, the use of calibration kernels may increase the variances of both the empirical loss and its gradient, making the training inefficient. 
To improve the stability of SNPE, this paper proposes to use an adaptive calibration kernel and several variance reduction techniques. The proposed method greatly speeds up the process of training and provides a better approximation of the posterior than the original SNPE method and some existing competitors as confirmed by numerical experiments. We also managed to demonstrate the superiority of the proposed method for a high-dimensional model with a real-world dataset.
\end{abstract}

\section{Introduction} \label{section:intro}
Simulator-based models are extensively used in a vast number of applications in science, including neuroscience \citep{paninski2018neural}, physics \citep{brehmer2020mining, gonccalves2020training}, biology \citep{hashemi2023amortized, clarke2008properties}, and inverse graphics \citep{romaszko2017vision}, since it is an important tool to describe and understand the process being investigated from the provided observation. When applying traditional Bayesian inference on a simulator-based model, one finds difficulties like intractable likelihood function $p(x|\theta)$ and running the simulator can be computationally expensive. 

To address these challenges, a series of likelihood-free Bayesian computation (LFBC) methods have been developed in the past 20 years. These methods aim to perform Bayesian inference without explicitly calculating the likelihood function, such as approximate Bayesian computation (ABC) \citep{beaumont2002approximate, marin2012approximate} and synthetic likelihoods (SL) \citep{wood2010statistical, price2018bayesian}, Bayes optimization \citep{gutmann2016bayesian}, likelihood-free inference by ratio estimation \citep{thomas2022likelihood} and pseudo marginal methods \citep{andrieu2009pseudo, andrieu2010particle}. 

Currently, optimization-based approaches are widely employed for LFBC. These approaches aim to minimize the Kullback-Leibler (KL) divergence between the variational density $q_\phi(\theta)$ and the posterior distribution $p(\theta|x_o)$. One prominent optimization-based approach is the variational Bayes (VB) method. In the likelihood-free context, progress has been made in applying VB. \citet{barthelme2014expectation} utilized expectation propagation, a variational approximation algorithm, to approximate posteriors. \citet{he2022unbiased} developed an unbiased VB method based on nested Multi-level Monte Carlo to handle intractable likelihoods. However, these approaches require nested simulations to estimate the gradient estimator, resulting in additional computational costs.

In recent research, there has been a growing interest in combining neural networks with classic VB methods, particularly using neural network architectures like normalizing flows \citep{kobyzev2020normalizing}. The integration of neural networks and VB provides a flexible and powerful framework for density estimation, offering multiple choices for the variational density. This combination has gained attention due to its outstanding performance. Sequential neural likelihood (SNL) method \citep{papamakarios2019sequential} approximates the likelihood $p(x|\theta)$ by a neural network-based density estimator $q_\phi(x|\theta)$, then evaluates the posterior density based on it. Sequential neural ratio estimation (SNRE) method \citep{hermans2020likelihood, durkan2020contrastive, miller2022contrastive} computes the approximation ratio of likelihood without evaluating the true likelihood. The neural posterior estimation (NPE) method approximates the true posterior density $p(\theta|x)$ by a neural posterior estimator $q_{F(x,\phi)}(\theta)$, where $F(x,\phi)$ denotes a neural network structure. When the inference problem is performed only on an observation $x_o$, data efficiency can be improved with sequential training schemes proposed in the sequential neural posterior estimation (SNPE) method \citep{papamakarios2016fast, lueckmann2017flexible, apt}: model parameters are drawn from the proposal, a more informative distribution about $x_o$ compared to the prior distribution. Meanwhile, SNPE requires a modification of the loss function called the `correction step' compared to NPE to ensure that $q_{F(x,\phi)}(\theta)$ is the approximation of the target posterior $p(\theta|x)$. Many approaches have been developed to solve this problem. These methods are summarized and reviewed in \citet{cranmer2020frontier} and they have been benchmarked in \citet{lueckmann2021benchmarking}.

Our work is based on the SNPE-B method proposed by \citet{lueckmann2017flexible}. It is worth noticing that SNPE-B proposed to use a calibration kernel to solve data inefficiency by assigning proper weights to each pair of samples $(\theta, x)$, leading to a better convergence of the neural network with the same amount of data. However, the use of calibration kernels may increase the variances of both the empirical loss and its gradient, making the training inefficient. Therefore, we propose to use an adaptive calibration kernel together with some variance reduction techniques. Specifically, we use Gaussian calibration kernels \citep{beaumont2002approximate} by selecting the calibration kernel rate adaptively. A theoretical analysis of the variance of the Monte Carlo estimator is provided to guide the design of variance reduction techniques, including defensive importance sampling and sample recycling.
Finally, a comprehensive comparative analysis of our proposed methods with some existing methods is carried out on numerical experiments. The experimental results show that the improvement of using either calibration kernels or variance reduction techniques is minor, suggesting the necessity of their combination. The code is available on GitHub\footnote{\url{https://github.com/Yifei-Xiong/Efficient-SNPE}}.

The rest of this paper is organized as follows. In Section \ref{section:background}, we review the basic SNPE formulation together with the SNPE-B algorithm and illuminate their limitations. In Section \ref{section:meth_ack_varred}, we firstly establish a theory to analyze the variance of calibration kernel technique, with which we identify three issues that could lead to an excessive variance of our method, including unstable density ratio, the trade-off of the bandwidth of calibration kernel and limited training samples. For these three problems, we propose corresponding solutions for each of them, including the adaptive calibration kernel method, sample recycling strategy, and defensive sampling. We also examine the mass leakage problem in this section and propose the corresponding solution.
In Section \ref{section:experiments}, some numerical experiments are conducted to support the advantages of our proposed methods, including a high-dimensional model with real-world dataset. Section \ref{section:discuss} concludes the main features of our strategies and points out some research directions in the future work.

\section{Background} \label{section:background}
Let $p(\theta)$ be the prior of the model parameter of interest, given an observed sample $x_o$ of interest. We aim to make an inference on the posterior distribution $p(\theta|x_o)\propto p(\theta) p(x_o|\theta)$. However, in most cases, the likelihood function $p(x|\theta)$ either does not have an explicit expression or is difficult to evaluate but can be expressed in the form of a `simulator'. In other words, given a fixed model parameter $\theta$, we can generate samples $x\sim p(x|\theta)$ without evaluating the likelihood. 

Our goal is to fit the posterior $p(\theta|x_o)$ by a tractable density. The KL divergence is commonly used to measure the discrepancy between two general densities $P(x), Q(x)$, which is defined as $$\mathcal{D}_{\mathrm{KL}}\left(P(x)\|Q(x)\right)  = \int P(x)\log \frac{ P(x)}{Q(x)}\mathrm{d}x.$$
If the likelihood $p(x|\theta)$ is tractable, one may find a good density estimation $q(\theta)$ in a certain family of distributions by minimizing the KL divergence $\mathcal{D}_{\mathrm{KL}}\left(p(\theta|x_o)\|q(\theta)\right)$ or $\mathcal{D}_{\mathrm{KL}}\left(q(\theta)\|p(\theta|x_o)\right)$. However, things become difficult when the likelihood is intractable. From another point of view, likelihood-free inference can be viewed as a problem of conditional density estimation. In such a framework, a conditional density estimator $q_{F(x, \phi)}(\theta)$, such as a neural network-based density estimator, where $\phi$ stands for the parameter of the density estimator, is used to approximate $p(\theta|x)$ \cite{papamakarios2017masked, dinh2016density, apt}. Now our objective is to minimize the \textit{average KL divergence} under the marginal $p(x)=\int p(\theta) p(x|\theta) \mathrm{d} \theta$, i.e.,
\begin{equation}\label{eq:npe}
\begin{aligned}
\mathbb{E}_{ p(x)}&\left[\mathcal{D}_{\mathrm{KL}}\left(p(\theta|x)\| q_{F(x,\phi)}(\theta)\right)\right]\\
&= \iint p(x)p(\theta|x) \left(\log  p(\theta|x)-\log q_{F(x,\phi)}(\theta)\right) \mathrm{d}x\mathrm{d}\theta\\
&=-\mathbb{E}_{p(\theta, x)}\left[\log q_{F(x,\phi)}(\theta)\right] + \iint  p(\theta,x) \log p(\theta|x) \mathrm{d}x\mathrm{d}\theta\\
& := L(\phi) + \iint p(\theta,x) \log p(\theta|x) \mathrm{d}x\mathrm{d}\theta,
\end{aligned}
\end{equation}
where the term
$$L(\phi)=-\mathbb{E}_{p(\theta, x)}\left[\log q_{F(x,\phi)}(\theta)\right]$$
is defined as the loss function.
Since the second term in Eq.\eqref{eq:npe} is unrelated to $\phi$, minimizing the loss function $L(\phi)$ is equivalent to minimizing the expected KL divergence of $p(\theta|x)$ and $q_{F(x, \phi)}(\theta)$ concerning $x\sim p(x)$. Since $L(\phi)$ is intractable, one common way is to use the empirical loss function obtained by Monte Carlo simulation, i.e.,
\begin{equation*}
\hat{L}(\phi) = -\frac{1}{N}\sum_{i=1}^N \log q_{F(x_i,\phi)} (\theta_i),
\end{equation*}
where the training data $\{(\theta_i, x_i)\}_{i=1}^N$ are sampled from the joint probability density $p(\theta, x)=p(\theta)p(x|\theta)$. After training, $p(\theta|x_o)$ can be easily approximated by $q_{F(x_o, \phi)}(\theta)$. This strategy differs from the way of fitting the posterior $p(\theta|x_o)$ directly.

Since the conditional density estimation at $x_o$ is ultimately used, a distribution $\tilde{p}(\theta)$ which is more informative about $x_o$ instead of $p(\theta)$ is preferred to generate $\theta$, which is called the proposal. After initializing $\tilde{p}(\theta)$ as $p(\theta)$, we then want the approximation of $p(\theta|x_o)$ to serve as a good proposal in the subsequent round. This conditional density estimation with an adaptively chosen proposal is called the SNPE. However, by replacing the joint density $p(\theta,x)$ with $\tilde{p}(\theta,x) = \tilde{p}(\theta)p(x|\theta)$ in $L(\phi)$, it turns out that $q_{F(x_o,\phi)}(\theta)$ approximates the so-called \textit{proposal posterior}, which is defined as
\begin{equation}
\tilde{p}(\theta|x)=p(\theta|x)\frac{\tilde{p}(\theta)p(x)}{p(\theta)\tilde{p}(x)},
\end{equation}
where $\tilde p(x)=\int \tilde p(\theta) p(x|\theta)\mathrm{d}\theta$. Hence adjustment towards the loss function $L(\phi)$ is required. The mainstream approaches to address this problem can be mainly categorized into three methods: SNPE-A \cite{papamakarios2016fast}, SNPE-B \cite{lueckmann2017flexible}, and automatic posterior transformation (APT, also known as SNPE-C) \cite{apt}. 
These methods iteratively refine the parameter $\phi$ and proposal $\tilde{p}(\theta)$ over a series of iterations, commonly referred to as the `rounds' of training. In $r$-th round, a distinct proposal $\tilde{p}_r(\theta)$ is used, leading to different loss functions.

The SNPE-A method, as outlined by \citet{papamakarios2016fast}, constrains the distribution $q_{F(x,\phi)}(\theta)$ to a specific structure, such as a mixture of Gaussians (MoG). Where the prior distribution $p(\theta)$ and proposal distribution $\tilde{p}(\theta)$ are restricted to be selected from the uniform distribution and Gaussian distribution. This approach enables a closed-form solution post-training, given by 
\begin{equation*}
q^\prime_{F(x, \phi)}(\theta) = q_{F(x, \phi)}(\theta) \frac{p(\theta) \tilde p(x)}{\tilde p(\theta) p(x)},
\end{equation*}
subsequently, an approximation of the posterior $p(\theta | x_o)$ is achieved through $q^\prime_{F(x_o, \phi)}(\theta)$. However, a limitation arises when the covariance matrix of some components in $q^\prime_{F(x_o, \phi)}(\theta)$ becomes non-positive definite. This occurs if the proposal $\tilde p(\theta)$ is narrower than some component of $q_{F(x_o, \phi)}(\theta)$, leading to a potential failure of the method.

\citet{apt} proposes the APT method, whose modification of neural posterior is formulated as 
\begin{equation}\label{eq:loss_apt}
    \tilde q_{F(x,\phi)}(\theta) = q_{F(x,\phi)}(\theta) \frac{\tilde p(\theta)}{p(\theta)} \frac{1}{Z(x,\phi)},
\end{equation}
where $Z(x,\phi)=\int q_{F(x,\phi)}(\theta) \tilde p(\theta)/p(\theta)\mathrm{d}\theta$ represents a normalization constant. Following \citep[][Proposition 1]{papamakarios2016fast}, when $\tilde q_{F(x,\phi)}(\theta)$ converges to the \textit{proposal posterior} $\tilde{p}(\theta|x)\propto\tilde p(\theta)p(x|\theta)$, it implies that $q_{F(x,\phi)}(\theta)$ converges to $p(\theta|x)$. To compute $Z(x,\phi)$, \cite{papamakarios2016fast} proposes the atomic method, which enables the analytical computation of $Z(x,\phi)$ by setting $\tilde{p}(\theta)$ as a discrete distribution. However, this method necessitates nested simulation when computing $Z(x, \phi)$, which implies extra simulation and computational costs.

The SNPE-B method, introduced by \citet{lueckmann2017flexible}, avoids this issue by incorporating an additional density ratio ${p(\theta)}/{\tilde p(\theta)}$ acting as importance weight and a calibration kernel $K_\tau(x,x_o)$ \citep{blum2010non} directing its attention towards simulated data points $x$ that exhibit proximity to $x_o$ into the loss function. It is formulated with
\begin{align}
\tilde{L}(\phi)&:=\mathbb{E}_{\tilde p(\theta, x)}\left[-K_\tau(x, x_o)\frac{p(\theta)}{\tilde p(\theta)}\log q_{F(x,\phi)}(\theta)\right] \notag \\
&=\int  p(x)K_\tau(x, x_o)\left[\int p(\theta|x)\log\left(\frac{ p(\theta|x)}{q_{F(x,\phi)}(\theta)}\right)\mathrm{d}\theta\right]\mathrm{d}x\notag + \mathrm{Const.} \notag\\
&=C_\tau\mathbb{E}_{ p_{\tau, x_o}(x)}\left[\mathcal{D}_{\mathrm{KL}}\left(p(\theta|x)\| q_{F(x,\phi)}(\theta)\right)\right]+ \mathrm{Const}, \label{eq:calib}
\end{align}
where $p_{\tau, x_o}(x):={p(x)K_\tau(x, x_o)}/{C_\tau}$ and $C_\tau:={\int p(x)K_\tau(x, x_o) \mathrm{d} x}$. It is worth noting that the density $p_{\tau, x_o}(x)$ will assign more mass in the neighborhood of $x_o$. The application of the calibration kernel enhances data efficiency by concentrating the KL divergence around $x_o$, thereby improving the accuracy of the density estimator. Subsequently, its unbiased estimator is then given as follows
\begin{equation}
\hat{\tilde{L}}(\phi)=
-\frac{1}{N}\sum_{i=1}^N K_\tau(x_i, x_o)\frac{p(\theta_i)}{\tilde p(\theta_i)}\log q_{F(x_i,\phi)}(\theta_i). \label{eq:calib_estimator}
\end{equation}
Consequently, the model parameter $\theta$ can be drawn from any proposal \cite{papamakarios2019sequential}, and this proposal may even be chosen using active learning \cite{gutmann2016bayesian, lueckmann2019likelihood, jarvenpaa2019efficient}. The tractability of the originally proposed calibration kernel depends on the structure of $q_{F(x,\phi)}(\theta)$. Unfortunately, it is intractable for most network structures. Consequently, aiming to compare this method with ours in this paper with flexible network structures, we use its calibration-kernel-removed version as the standard SNPE-B.

However, the incorporation of an additional density ratio in Eq.\eqref{eq:calib} usually results in unstable training. This instability is evidenced both in our numerical experiments presented in Figure \ref{fig:valid_nsf} and as described in the numerical experiments of \citet{apt}. One main reason is due to the small density of $\tilde{p}(\theta)$ in the denominator. To address these issues, we propose a suite of methodologies. This includes the implementation of defensive sampling to reduce variance and the adoption of a sample recycling strategy based on multiple-importance sampling. 

\section{Methods} \label{section:meth_ack_varred}

\subsection{Effects of calibration kernels} \label{calib}
The Eq.\eqref{eq:calib} indicates the expected approximation of $p(\theta|x)$ by $q_{F(x,\phi)}(\theta)$. However, since only the posterior $p(\theta|x)$ at the specific observed sample $x_o$ is of interest, it is desirable to minimize the KL divergence around $x_o$ rather than over the entire distribution $p(x)$. One approach to accomplish this is to incorporate a kernel function $K_\tau(x, x_o)$ which was introduced by \citet{blum2010non}, quantifying the proximity of the data point $x$ to $x_o$, the point of interest. \citet{lueckmann2017flexible} constructed a kernel that leverages the influence of the distance between $x$ and $x_o$ on the KL divergence between the associated posteriors, represented as $\exp(-\mathcal{D}_{\mathrm{KL}}(q_{F(x,\phi^{(r)})}(\theta) || q_{F(x_o,\phi^{(r)})}(\theta))/\tau)$ in the $r$-th round, where $\tau$ is a positive parameter called the $\textit{calibration kernel rate}$. In kernel density estimation, it is more universally known as the $\textit{bandwidth}$. In this approach, two data sets are considered similar if the current estimation by the density network assigns similar posterior distributions to them. However, this method is analytically tractable only when $q$ is chosen in a special form, such as a Gaussian mixture. Limited flexibility in $q$ can lead to decreased performance in the final approximation \citep{lueckmann2021benchmarking, deistler2022truncated}. For more complex density forms such as flow-based density \cite{papamakarios2017masked, durkan2019neural}, the computation of KL divergence becomes intractable. A simple and popular choice is the Gaussian kernel \citep{beaumont2002approximate}, defined as $K_\tau(x, x_o)=(2\pi)^{-d/2}\tau^{-d}\exp(-\|x-x_o\|_2^2/(2\tau^2))$, where $d$ represents the data dimension of data $x$.

To account for the impact of variations in the magnitudes of different components of $x$ on the distance $\|x-x_o\|_2^2$, the Mahalanobis distance can be employed \cite{peters2012sequential, erhardt2016modelling}. It is defined as $\|x-x_o\|_\mathrm{M}^2=(x-x_o)^\top\Sigma^{-1}(x-x_o)$, where $\Sigma$ denotes the covariance matrix of $x$. The covariance matrix can be estimated by $\hat\Sigma=\frac{1}{N-1}\sum_{i=1}^N(x_i-\bar{x})(x_i-\bar{x})^\top$, where $\bar x=\frac{1}{N}\sum_{i=1}^N x_i$. The Mahalanobis distance is invariant to scaling compared with Euclidean distance. Using the Mahalanobis distance, the modified Gaussian kernel function is given by $K_\tau(x, x_o)=(2\pi)^{-d/2}|\Sigma|^{-1/2}\tau^{-d}\exp(-(x-x_o)^\top\Sigma^{-1}(x-x_o)/(2\tau^2))$.
Figure \ref{fig:ack_plot} illustrates the impact of the calibration kernel rate on the sample weight when selecting different $\tau$. This can be seen as a rare event sampling method, where rare events refer to a neighborhood of $x_o$ in the sense of Mahalanobis or Euclidean distance.

\begin{figure}[t]
\includegraphics[width=1.05\textwidth]{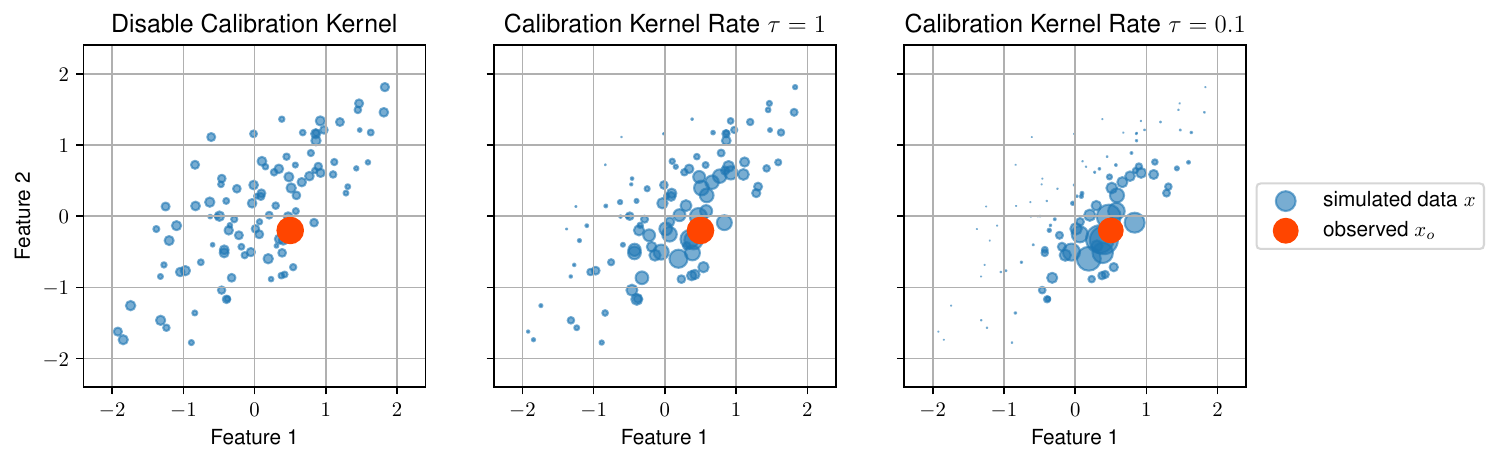}
\caption{The calibration kernel adjusts the sample weights around $x_o$, where larger points are assigned higher weights. Left plot: only the density ratio $p(\theta)/\tilde p(\theta)$ is used to weight each sample, without considering the calibration kernel. Middle plot: the calibration kernel with rate $\tau=1$ is incorporated to adjust the sample weights. Right plot: the calibration kernel is applied with rate $\tau=0.1$.}
\label{fig:ack_plot}
\end{figure}

The calibration kernel rate $\tau$ has an impact on the variance of the empirical loss and its gradient. In extreme situations where $\tau$ tends to 0, the encountered distribution degenerates to a Dirac function, and the only observation available in training is $x_o$, indicating that in most cases there will be few or even no samples in the training dataset. When a too large $\tau$ is used, the distribution becomes flat, and hence similar weights will be assigned to all samples making the inference inefficient. To examine the impact of calibration kernels on the variance of the empirical loss in greater detail, we have the following theorem.

\begin{thm} \label{thm:calib_analysis}
Let $g(\theta,x)$ be any function and 
\begin{align*}\mu(\tau)  & :=\mathbb{E}_{\tilde{p}(\theta, x)}\left[K_\tau\left(x, x_o\right) g(\theta, x)\right], \\
V(\tau) & :=\operatorname{Var}_{\tilde{p}(\theta, x)}\left[K_\tau\left(x, x_o\right) g(\theta, x)\right],\\
h_g(x)&:=\int g(\theta, x)\tilde p(\theta, x)\mrd\theta.
\end{align*}
Assume that $h_g(x)$ and $h_{g^2}(x)$ are continuously differentiable twice with bounded second-order derivatives.
As $\tau \to 0$, we have
\begin{align}
\mu(\tau) & =h_g\left(x_o\right)+\mathcal{O}\left(\tau^2\right), \label{eq:mean_tau} \\ 
V(\tau) & =C h_{g^2}\left(x_o\right) \tau^{-d}+\mathcal{O}\left(\tau^{2-d}\right), \label{eq:var_tau}
\end{align}
where $C=(2 \pi)^{-d / 2} 2^{-d / 2}|\Sigma|^{-1/2}$.
\end{thm}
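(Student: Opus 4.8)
The plan is to first integrate out $\theta$ so that both $\mu(\tau)$ and the second moment $\mathbb{E}_{\tilde p(\theta,x)}[K_\tau(x,x_o)^2 g(\theta,x)^2]$ collapse onto integrals of the smooth profiles $h_g$ and $h_{g^2}(x):=\int g(\theta,x)^2\tilde p(\theta,x)\,\mrd\theta$ against a Gaussian kernel, and then to exploit the fact that $K_\tau(\cdot,x_o)$ and its square are each, up to an explicit constant, normalized Gaussian densities. Both quantities then become Gaussian-smoothed values of $h_g$ and $h_{g^2}$ near $x_o$, which I control by Taylor expansion.

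First I would write $\mu(\tau)=\int K_\tau(x,x_o)\,h_g(x)\,\mrd x$ by Fubini and the definition of $h_g$. Since $K_\tau(x,x_o)$ is exactly the density of $\mathcal{N}(x_o,\tau^2\Sigma)$, this is $\mathbb{E}[h_g(x_o+y)]$ with $y\sim\mathcal{N}(0,\tau^2\Sigma)$. A first-order Taylor expansion $h_g(x_o+y)=h_g(x_o)+\nabla h_g(x_o)^\top y+R(y)$ with $|R(y)|\le \tfrac{M}{2}\|y\|^2$ (valid under the bounded-second-derivative hypothesis) then gives Eq.\eqref{eq:mean_tau}: the linear term vanishes as $\mathbb{E}[y]=0$, and $|\mathbb{E}[R(y)]|\le \tfrac{M}{2}\mathbb{E}\|y\|^2=\tfrac{M}{2}\tau^2\,\mathrm{tr}(\Sigma)=\mathcal{O}(\tau^2)$.

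For the variance I would use $V(\tau)=\mathbb{E}[K_\tau^2 g^2]-\mu(\tau)^2$ and, analogously, write $\mathbb{E}[K_\tau^2 g^2]=\int K_\tau(x,x_o)^2 h_{g^2}(x)\,\mrd x$. The key algebraic step is the identity $K_\tau(x,x_o)^2=C\,\tau^{-d}\,\rho_\tau(x-x_o)$, where $\rho_\tau$ is the density of $\mathcal{N}(0,\tau^2\Sigma/2)$ and $C=(2\pi)^{-d/2}2^{-d/2}|\Sigma|^{-1/2}$; this follows by completing the square in the exponent and matching normalizing constants. Hence $\mathbb{E}[K_\tau^2 g^2]=C\tau^{-d}\,\mathbb{E}[h_{g^2}(x_o+y')]$ with $y'\sim\mathcal{N}(0,\tau^2\Sigma/2)$, and the same Taylor argument yields $\mathbb{E}[h_{g^2}(x_o+y')]=h_{g^2}(x_o)+\mathcal{O}(\tau^2)$, so that $\mathbb{E}[K_\tau^2 g^2]=C h_{g^2}(x_o)\tau^{-d}+\mathcal{O}(\tau^{2-d})$. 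Since $\mu(\tau)^2=h_g(x_o)^2+\mathcal{O}(\tau^2)=\mathcal{O}(1)$ is dominated by the $\tau^{-d}$ term, subtracting it leaves Eq.\eqref{eq:var_tau}.

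The main obstacle is the bookkeeping in the Gaussian-square identity: correctly identifying the rescaled covariance $\tau^2\Sigma/2$ and verifying that the leftover constant is exactly $C$, since this simultaneously pins down the $\tau^{-d}$ rate and the precise coefficient in Eq.\eqref{eq:var_tau}. The only other delicate point is the uniform control of the two Taylor remainders, which is precisely where the bounded-second-derivative assumptions on $h_g$ and $h_{g^2}$ enter; they let me bound the remainder expectations by a constant times $\tau^2$ without ever computing the Hessian terms explicitly. I would also remark that absorbing the $\mathcal{O}(1)$ contribution $-\mu(\tau)^2$ into $\mathcal{O}(\tau^{2-d})$ is clean in the regime $d\ge 2$, which is exactly the regime of interest where the variance genuinely blows up.
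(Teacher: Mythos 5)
Your proof is correct and follows essentially the same route as the paper's: both reduce $\mu(\tau)$ and the second moment to Gaussian-smoothed values of $h_g$ and $h_{g^2}$ via Fubini, use the identity that $K_\tau(x,x_o)^2$ equals $C\tau^{-d}$ times the density of $\mathcal{N}(x_o,\tau^2\Sigma/2)$, and finish by second-order Taylor expansion with the bounded-Hessian hypothesis (your probabilistic phrasing $\mathbb{E}[h_g(x_o+y)]$ is just the paper's change of variables $x\mapsto x_o+\tau x$ in disguise). Your closing observation that absorbing the $\mathcal{O}(1)$ term $-\mu(\tau)^2$ into $\mathcal{O}(\tau^{2-d})$ is only clean when $d\ge 2$ is a point the paper passes over silently, and is worth retaining.
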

The proof is detailed in Appendix~\ref{appendix:calib_analysis}. As a result, by plugging
\begin{align*}
    g(\theta,x) = \frac{p(\theta)}{\tilde{p}(\theta)} \log q_{F(x_o,\phi)}(\theta),
\end{align*}
into Eq.\eqref{eq:var_tau} divided by $N$, we obtain the variance of the estimator $\hat{\tilde{L}}(\phi_t)$ given by Eq.\eqref{eq:calib_estimator}, i.e.,
\begin{align}
\operatorname{Var}\left(\hat{\tilde{L}}\left(\phi_t\right)\right)&=V_1\left(\phi_t\right) \tau^{-d} N^{-1}+\mathcal{O}\left(\tau^{2-d} N^{-1}\right), \label{eq:var_of_loss_1}
\end{align}
where
\begin{align}\label{eq:var1}
V_1(\phi) 
& =C p\left(x_o\right) \int \frac{p(\theta)}{\tilde{p}(\theta)}\left( \log q_{F\left(x_o, \phi\right)}(\theta)\right)^2 p(\theta | x_o) \mrd \theta.
\end{align}
Similarly, the variance of the gradient estimator $\operatorname{Var}(\nabla_\phi \hat{\tilde{L}}(\phi_t))$ is given as
\begin{align}
\operatorname{Var}\left(\nabla_\phi \hat{\tilde{L}}\left(\phi_t\right)\right)&=V_2\left(\phi_t\right) \tau^{-d} N^{-1}+\mathcal{O}\left(\tau^{2-d} N^{-1}\right), \label{eq:var_of_loss_2}
\end{align}
with
\begin{equation}\label{eq:var2}
V_2(\phi) =C p\left(x_o\right) \int \frac{p(\theta)}{\tilde{p}(\theta)}\left(\nabla_\phi \log q_{F\left(x_o, \phi\right)}(\theta)\right)^2 p(\theta | x_o) \mrd \theta.
\end{equation}

From Eq.\eqref{eq:var_of_loss_1} and Eq.\eqref{eq:var_of_loss_2}, there are two issues to be solved to diminish the variance of SNPE-B. The first issue, as indicated by Eq.\eqref{eq:var1} and Eq.\eqref{eq:var2}, is that small-density areas of $\tilde{p}(\theta)$ can lead to excessive variance.
The second issue is attributed to the sample size $N$. Although increasing $N$ can be advantageous for variance reduction, the cost of conducting multiple simulations can be extremely high. The sampling recycling technique \cite{apt} has been demonstrated to be a useful approach to tackle this problem. Owing to the special formulation of importance sampling, further improvements can be achieved.

How to choose an appropriate $\tau$? Eq.\eqref{eq:mean_tau} indicates that a sufficiently small $\tau$ allows us to infer the true value of the posterior density at $x_o$. However, as $\tau \rightarrow 0$ in Eq.\eqref{eq:var_tau}, $V(\tau)=\mathcal{O}(\tau^{-d})$ increases exponentially, leading to the curse of dimensionality. This is a trade-off to choose a proper $\tau$ to balance the efficiency and variance. To handle the curse of dimensionality, we propose to use low-dimensional summary statistics instead of the full data (see Appendix ~\ref{appendix:summary_stat} for details). However, this introduces additional bias, as detailed in \cite{fearnhead2012constructing}. In this paper, we do not look into the effects of using summary statistics for SNPE methods. We refer to \citet{fearnhead2012constructing} for a semi-automatic method of constructing summary statistics in the context of ABC.

\subsection{Reducing variance in SNPE-B: our proposed method}\label{sec:var_red}

\subsubsection{Defensive sampling-based variance control} \label{defensive}
It is shown in the estimator in Eq.\eqref{eq:calib_estimator} that a relatively small value of the proposal $\tilde p(\theta)$ for $\theta_i\sim \tilde p(\theta)$ can lead an exceedingly large value of the density ratio $p(\theta_i)/\tilde p(\theta_i)$. This, in turn, can cause a substantial increase in the variance of the loss function as it is indicated in Eq.\eqref{eq:var_of_loss_1} and Eq.\eqref{eq:var_of_loss_2}. It would be sufficient to address this issue by ensuring the heavy tail of the proposal $\tilde{p}(\theta)$ compared to the prior $p(\theta)$. As an essential problem in the importance sampling community, defensive sampling, which is reported in \cite{hesterberg1995weighted, owen2000safe}, has been proven to be a useful solution. Pick $\alpha$ with $0< \alpha< 1$ and choose
\begin{align}\label{eq:defensive}
p_\alpha(\theta) =  (1-\alpha) q_{F(x_o,\phi^*)}(\theta) + \alpha p_{\mathrm{def}}(\theta),
\end{align}
as the new proposal distribution rather than $q_{F(x_o,\phi^*)}(\theta)$, where $\phi^*$ is the optimal neural network parameter obtained from the last round, and $p_{\mathrm{def}}(\theta)$ is called the defensive density, with the `ideal property' of being easily sampled and avoiding excessively low density. A possible choice of $p_{\mathrm{def}}(\theta)$ is uniform distribution. 
From Eq.\eqref{eq:var1}, the variance of the estimator of the loss function $\operatorname{Var}(\hat{L}(\phi_t))$ is well controlled by noticing 
\begin{align}
\int \frac{p(\theta)}{p_\alpha(\theta)}\left( \log q_{F\left(x_o, \phi\right)}(\theta)\right)^2 p(\theta | x_o) \mrd \theta 
&\le \frac{1}{\alpha}\int \frac{p(\theta)}{p_{\mathrm{def}}(\theta)}\left( \log q_{F\left(x_o, \phi\right)}(\theta)\right)^2 p(\theta | x_o) \mrd \theta \notag \\
&\le \frac{M}{\alpha}\int \left( \log q_{F\left(x_o, \phi\right)}(\theta)\right)^2 p(\theta | x_o) \mrd \theta, \label{eq:ds_bd_var}
\end{align}
where the ratio $p(\theta)/p_{\mathrm{def}}(\theta)$ is assumed to be bounded by a constant $M>0$. The variance of the gradient estimator Eq.\eqref{eq:var2} can be controlled similarly. This method is summarized as Algorithm \ref{alg:defensive}.
\begin{algorithm}
\caption{Defensive sampling-based SNPE-B method}\label{alg:defensive}
\begin{algorithmic}[1]
\State Initialization: $\tilde p(\theta) := p(\theta)$, given $\alpha\in(0, 1)$, number of rounds $R$, simulations per round $N$
\For{$r=1,2,\cdots,R$}
\State sample $\{\theta_i\}_{i=1}^N$ from $\tilde p(\theta)$
\State sample $x_i\sim p(x|\theta_i),i=1,2,\cdots,N$, resulting in $\{x_i\}_{i=1}^N$
\State update $\phi^* := \mathop{\arg\min}_{\phi} -\frac{1}{N}\sum_{i=1}^N\frac{p(\theta_i)}{\tilde p(\theta_i)} K_\tau (x_i, x_o) \log q_{F(x_i, \phi)}(\theta_i)$
\State set $\tilde p(\theta)=(1-\alpha) q_{F(x_o, \phi^*)}(\theta) + \alpha p_{\mathrm{def}}(\theta)$ 
\EndFor \\
\Return $q_{F(x_o, \phi^*)}(\theta)$
\end{algorithmic}
\end{algorithm}

\subsubsection{Sample recycling and multiple importance sampling}\label{sec:MISR}

\sloppy One effective way to enhance training and sample efficiency is to train $q_{F(x,\phi)}(\theta)$ on samples from previous rounds. Specifically, let $N_r$ denote the number of parameter samples generated in the $r$th round, where the corresponding proposal is denoted as $\tilde p_r(\theta)$. The parameters sampled from the proposal in the $r$th round are denoted as $\theta_1^{(r)},\cdots,\theta_{N_r}^{(r)}$, along with the corresponding observations generated with the simulator $x_1^{(r)},\cdots,x_{N_r}^{(r)}$, where $r=1,\cdots,R$. Note that for any choice of proposal in rounds $k\leq r$, 
\begin{equation*}
-\frac{1}{N_k} \sum_{i=1}^{N_k} \frac{p(\theta_{i}^{(k)})}{\tilde{p}_k(\theta_i^{(k)})}K_\tau (x_i^{(k)},x_o)\log q_{F(x_i^{(k)},\phi)}(\theta_i^{(k)})
\end{equation*}
is an unbiased estimator of $\tilde{L}(\phi)$ given by Eq.\eqref{eq:calib}. Therefore, aggregating the loss functions in the first $r$ rounds may yield a better unbiased estimator of $\tilde{L}(\phi)$, given by
\begin{align}\label{eq:reuse1}
  L^{(r)}(\phi) 
  &=\frac{1}{r}\sum_{k=1}^r\left(-\frac{1}{N_k} \sum_{i=1}^{N_k} \frac{p(\theta_i^{(k)})}{\tilde p_k(\theta_i^{(k)})} K_\tau(x_i^{(k)}, x_o) \log q_{F(x_i^{(k)},\phi)} (\theta_i^{(k)})\right).
\end{align}

On the other hand, inspired by the multiple importance sampling method \cite{veach1995optimally}, we can construct a better estimator of the corresponding form
\begin{align}\label{eq:reuse_partition}
L_\omega^{(r)}(\phi) &=\sum_{k=1}^r\left(-\frac{1}{N_k} \sum_{i=1}^{N_k} \omega_k(\theta_i^{(k)})\frac{p(\theta_i^{(k)})}{\tilde p_k(\theta_i^{(k)})} K_\tau(x_i^{(k)}, x_o) \log q_{F(x_i^{(k)},\phi)} (\theta_i^{(k)})\right).
\end{align}
Here $\omega_k(\cdot)$ can be considered as a non-negative weight function and $\{\omega_k(\cdot)\}_{k=1}^r$ is a partition of unity, satisfying $\sum_{k=1}^r \omega_k(\theta)=1$ for all $\theta$. If all $\omega_k(\theta)=1/r$, $L_\omega^{(r)}(\phi)=L^{(r)}(\phi)$. One way of selecting weights with nearly optimal variance is to take $\omega_k(\theta)\propto N_k \tilde p_k(\theta)$, which is also known as the balance heuristic strategy \cite{veach1995optimally}. The balance heuristic strategy is highly effective as confirmed by the following theorem. We denote this unbiased estimator of the loss function based on the balance heuristic strategy as $L_{\mathrm{BH}}$, which has the following form
\begin{align}\label{eq:reusebh}
  L_{{\mathrm{BH}}}^{(r)}(\phi) 
  &=\sum_{k=1}^r\left(-\frac{1}{N_k} \sum_{i=1}^{N_k} \omega_k^{\mathrm{BH}}(\theta_i^{(k)})\frac{p(\theta_i^{(k)})}{\tilde p_k(\theta_i^{(k)})} K_\tau(x_i^{(k)}, x_o) \log q_{F(x_i^{(k)},\phi)} (\theta_i^{(k)})\right), 
\end{align}
where $$\omega_k^{\mathrm{BH}}(\theta)=\frac{N_k \tilde p_k(\theta)}{\sum_{k=1}^r N_k \tilde p_k(\theta)}.$$ This method is referred to as multiple importance sampling and recycling (MISR).
\begin{thm}
Let $L_\omega^{(r)}(\phi)$ and $L_{{\mathrm{BH}}}^{(r)}(\phi)$ be the loss estimators given in Eq.\eqref{eq:reuse_partition} and Eq.\eqref{eq:reusebh}, respectively. Then
\begin{equation*}\mathrm{Var}\left(L_{{\mathrm{BH}}}^{(r)}(\phi)\right) \le \mathrm{Var}\left(L_\omega^{(r)}(\phi) \right)+ \left(\frac{1}{\min_k N_k}-\frac{1}{\sum_{k=1}^r N_k}\right)\tilde{L}(\phi)^2,\end{equation*}
where $\tilde{L}(\phi)$ is given by Eq.\eqref{eq:calib}.
\end{thm}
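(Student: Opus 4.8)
The plan is to recognize this as the near-optimality theorem for the balance heuristic in multiple importance sampling \cite{veach1995optimally}, and to adapt its proof to the present setting. First I would recast both estimators in a common form. Writing $y=(\theta,x)$, let $\tilde p_k(y)=\tilde p_k(\theta)p(x|\theta)$ be the $k$-th round sampling density and set
\[
f(y) = -\,p(\theta)\,p(x|\theta)\,K_\tau(x,x_o)\,\log q_{F(x,\phi)}(\theta),
\]
so that $\tilde L(\phi)=\int f(y)\,\mrd y$ and the per-sample importance weight from round $k$ is $f(y)/\tilde p_k(y) = -\frac{p(\theta)}{\tilde p_k(\theta)}K_\tau(x,x_o)\log q_{F(x,\phi)}(\theta)$. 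Both $L_\omega^{(r)}$ and $L_{\mathrm{BH}}^{(r)}$ then have the generic shape $\sum_{k=1}^r \frac{1}{N_k}\sum_{i=1}^{N_k}\omega_k(y_i^{(k)})\,f(y_i^{(k)})/\tilde p_k(y_i^{(k)})$ with the $y_i^{(k)}\sim\tilde p_k$ drawn independently. I would note here that, since $p(x|\theta)$ is common to every round, it cancels in $\omega_k^{\mathrm{BH}}(y)=N_k\tilde p_k(y)/\sum_j N_j\tilde p_j(y)$, reducing it to the stated $\theta$-only weight.

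Next I would decompose each variance. By independence across rounds and i.i.d.\ sampling within a round,
\[
\operatorname{Var}\!\big(L_\omega^{(r)}(\phi)\big)=\sum_{k=1}^r\frac{1}{N_k}\left[\int\frac{\omega_k(y)^2 f(y)^2}{\tilde p_k(y)}\,\mrd y-\mu_k^2\right],\quad \mu_k:=\int\omega_k(y)f(y)\,\mrd y,
\]
and since $\{\omega_k\}$ is a partition of unity, $\sum_{k=1}^r\mu_k=\int f=\tilde L(\phi)$ for both weightings. This splits the variance into a \emph{second-moment} part $S[\omega]:=\sum_k \frac{1}{N_k}\int \omega_k^2 f^2/\tilde p_k$ and a \emph{mean-correction} part $\sum_k \mu_k^2/N_k$.

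The core of the argument is two applications of Cauchy--Schwarz. For the second-moment part, the pointwise bound $\big(\sum_k\omega_k(y)\big)^2\le\big(\sum_k \omega_k(y)^2/(N_k\tilde p_k(y))\big)\big(\sum_k N_k\tilde p_k(y)\big)$ together with $\sum_k\omega_k=1$ gives $1/\sum_j N_j\tilde p_j(y)\le \sum_k \omega_k(y)^2/(N_k\tilde p_k(y))$; multiplying by $f(y)^2$ and integrating shows the balance heuristic minimizes this part, $S[\omega^{\mathrm{BH}}]=\int f^2/\sum_j N_j\tilde p_j \le S[\omega]$ for every partition of unity. For the mean-correction part, Cauchy--Schwarz on $\big(\sum_k\mu_k^{\mathrm{BH}}\big)^2$ yields $\sum_k (\mu_k^{\mathrm{BH}})^2/N_k\ge \tilde L(\phi)^2/\sum_k N_k$, while $\sum_k (\mu_k^\omega)^2/N_k\le \tilde L(\phi)^2/\min_k N_k$. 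Assembling these,
\[
\operatorname{Var}\!\big(L_{\mathrm{BH}}^{(r)}\big)=S[\omega^{\mathrm{BH}}]-\sum_k\frac{(\mu_k^{\mathrm{BH}})^2}{N_k}\le S[\omega]-\frac{\tilde L(\phi)^2}{\sum_k N_k}\le \operatorname{Var}\!\big(L_\omega^{(r)}\big)+\frac{\tilde L(\phi)^2}{\min_k N_k}-\frac{\tilde L(\phi)^2}{\sum_k N_k},
\]
which is the claim.

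I expect the delicate step to be the mean-correction bound $\sum_k (\mu_k^\omega)^2\le \tilde L(\phi)^2=\big(\sum_k\mu_k^\omega\big)^2$, since this inequality requires the per-round means $\mu_k^\omega$ to share a common sign (with nonnegative weights $\omega_k\ge 0$ it reduces to the integrand $f$ being single-signed). I would therefore either argue that $f$ is effectively single-signed on the region that carries the mass, or split $f$ into its positive and negative parts and control each separately. Flagging this sign condition is the one place where the present log-density integrand departs from the classical nonnegative-radiance setting of \cite{veach1995optimally}, and it is where I would concentrate the most care.
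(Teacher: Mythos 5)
Your argument is, in substance, the paper's own proof: the paper disposes of this theorem in one line by citing \citep[][Theorem 1]{veach1995optimally}, and what you have written out is exactly the standard proof of that cited result --- the split of each variance into a second-moment term and a mean-correction term, the pointwise Cauchy--Schwarz step showing the balance heuristic minimizes the second-moment term, and the bounds $\sum_k (\mu_k^{\mathrm{BH}})^2/N_k \ge \tilde{L}(\phi)^2/\sum_k N_k$ and $\sum_k (\mu_k^{\omega})^2/N_k \le \tilde{L}(\phi)^2/\min_k N_k$ on the mean-correction terms. Your preliminary observation that $p(x|\theta)$ cancels, so that the balance heuristic on the joint $(\theta,x)$ space reduces to the $\theta$-only weights of Eq.\eqref{eq:reusebh}, is a detail the paper leaves implicit and is needed for the citation to apply.

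However, the sign condition you flagged at the end is not a worry you can argue away: it is a genuine gap, and it sits in the paper's one-line proof just as much as in your reconstruction. The step $\sum_k (\mu_k^{\omega})^2 \le \bigl(\sum_k \mu_k^{\omega}\bigr)^2$ requires the per-round means to share a common sign; Veach obtains this from $f \ge 0$ (radiance) together with $\omega_k \ge 0$, whereas here the sign of the integrand is the sign of $-\log q_{F(x,\phi)}(\theta)$, which flips wherever $q_{F(x,\phi)}(\theta)$ crosses $1$. Worse, without a single-sign hypothesis the stated inequality is simply false, so no proof can close the gap: take $r=2$, $N_1=N_2=1$, a two-point space $\{1,2\}$ with $\tilde p_1 = (2/3,\,1/3)$, $\tilde p_2 = (1/3,\,2/3)$, and integrand values $f(1)=1$, $f(2)=-1$, so that $\tilde{L}(\phi)=0$ and the correction term vanishes; since $\tilde p_1 + \tilde p_2 \equiv 1$, the balance-heuristic estimator equals $f(Y_1)+f(Y_2)$ with variance $8/9+8/9=16/9$, while the indicator weighting $\omega_1=(1,0)$, $\omega_2=(0,1)$ gives an unbiased estimator of variance $1/2+1/2=1 < 16/9$, violating the claimed bound. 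Your fallback of splitting $f = f_+ - f_-$ cannot repair this either: the variances of the two halves do not combine additively (cross-covariance terms appear), and the counterexample shows the target inequality itself is unavailable. The honest repair is an explicit hypothesis making the integrand single-signed --- for instance $q_{F(x,\phi)}(\theta)\le 1$ wherever the kernel and weights put mass, so that $-\log q_{F(x,\phi)}(\theta)\ge 0$ --- under which your proof, and the paper's citation of Veach, is complete.
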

\begin{proof}
This is an application of \citep[][Theorem 1]{veach1995optimally}. 
\end{proof}

\subsubsection{Adaptive calibration kernel based on an ESS criterion} \label{ess}

This section presents an adaptive selection method for determining an appropriate $\tau$ value in the training process. Recall that in Eq.\eqref{eq:mean_tau}, as $\tau\rightarrow 0$,
\begin{equation*}
L(\phi)=p(x_o)\mathcal{D}_{\mathrm{KL}}\left(p(\theta|x_o)\|q_{F(x_o,\phi)}(\theta)\right)+\mathcal{O}(\tau^2)+\mathrm{Const}.
\end{equation*}
As a result, when $\tau$ is small enough,
\begin{equation*}
\underset{\phi}{\mathrm{argmin}}\,L(\phi) \approx \underset{\phi}{\mathrm{argmin}} \,\mathcal{D}_{\mathrm{KL}}\left(p(\theta|x_o)\|q_{F(x_o,\phi)}(\theta)\right),
\end{equation*}
indicating that we should choose $\tau$ as small as possible. However, small $\tau$ will lead to a large variance of estimator according to Eq.\eqref{eq:var_of_loss_1}. 

In our basic setting, for a mini-batch of samples, the weight and the size of these samples serve as two indicators of their importance for training. The insight is that the greater the weight and sample size of a mini-batch of samples, the higher their importance in training. When selecting $\tau$, two types of samples can be obtained. If $\tau$ is large, the obtained samples will be a batch of samples with similar weights. If $\tau$ is small, the obtained samples will consist of a small portion of samples with larger weights and a large portion of samples with smaller weights. It may be difficult to compare the effects of these two types of samples since both indicators vary.

Effective sampling size (ESS) is proposed in \citet{kong1992note} and then popularized by \citet{liu1996metropolized}. By incorporating the sample weight and the sample size, it indicates `the real size of samples in training', which is defined as
\begin{equation*}
\mathrm{ESS}=\frac{\left(\sum_{i=1}^N w_i\right)^2}{\sum_{i=1}^N w_i^2},
\end{equation*}
where $w_i=({p(\theta_i)}/{\tilde p(\theta_i)})K_\tau(x_i, x_o)$ in Algorithm \ref{alg:defensive}. 

In the absence of MISR, the number of samples used in each round remains constant. A common criterion for the selection of ESS in this scenario is to define it as a fraction of the current iteration sample size $N$, that is, $\mathrm{ESS} = \gamma N$ for $\gamma\in(0,1)$.  
In each round, after simulating the dataset $\{(\theta_i, x_i)\}_{i=1}^N$, the value of $\tau$ is numerically determined by solving the equation $\mathrm{ESS}=\gamma N$ using the bisection method \citep{del2012adaptive}. This approach achieves a balance between the bias and variance of the estimator.

The introduction of MISR significantly alters the sampling framework. Here, the count of samples per round equates to the cumulative number of generated samples, denoted as $rN$. Instead of letting ESS grow with $r$ linearly, we find that a slower growth rate as $\log(r)$ empirically performs better. Therefore, we propose to set $\mathrm{ESS} = f(r) \gamma N$ in the case of MISR, where $f(r)=\log(r)+1$, so that $\mathrm{ESS}=\gamma N$ for the first round training. 

\subsection{Solving mass leakage: Parameter space transformation}\label{sec:pst}
In some cases, the support of the prior of the model parameter, $\mathrm{supp}(p(\theta))=\{\theta\in\mathbb{R}^n|p(\theta)>0\}$, is a bounded area, where $n=\mathrm{dim}(\theta)$, but $q_{F(x,\phi)}(\theta)$ usually represents a density whose support is unbounded, as does $\tilde p(\theta)$. When a sample pair $(\theta,x)$ falls outside $\mathrm{supp}(p(\theta))$, its weight will be assigned as $0$, effectively rejecting the sample. However, this can result in wasted samples and data inefficiency, particularly when running the simulator is expensive. Moreover, the mismatched support sets can also lead to mass leakage. 

One feasible solution to this problem is to use a truncated density and estimate the normalization factor after training $q_{F(x,\phi)}(\theta)$ (see \cite{apt}). However, this method can also lead to a significant amount of mass being leaked outside the prior support. Another efficient method is to sample only from the truncated proposal \cite{deistler2022truncated}, which requires an additional computational step to reject samples and may still result in mass leakage. These approaches may generate samples from the proposal that lie outside $\mathrm{supp}(p(\theta))$ and require additional post-processing steps.

One way to essentially solve mass leakage is to remap the parameter space. Suppose there exists a reversible transformation $h(\cdot)$ such that the density function of $\hat\theta := h(\theta)$ is $\hat p(\hat\theta))$ and satisfies $\mathrm{supp}(\hat p(\hat\theta))=\mathbb{R}^n$. If $h(\cdot)$ is strictly monotone and invertible on each dimension of $\mathrm{supp}(p(\theta))$, then $\hat\theta$ has a density
\begin{equation*}
\hat p(\hat\theta)=p(h^{-1}(\hat\theta))\left\lvert\frac{\partial}{\partial \hat\theta}h^{-1}(\hat\theta)\right\rvert.
\end{equation*}
Using this transformation, it is possible to train $q_{F(x,\phi)}(\theta)$ on an unbounded support space derived from the transformed input. Additionally, once the posterior density $\hat p(\hat\theta|x_o)$ has been obtained in the transformed space, we can transform it back to the original parameter space using:
\begin{equation*}
p(\theta|x_o)=\hat p\left(h(\theta)|x_o\right)\left\lvert\frac{\partial}{\partial \theta}h(\theta)\right\rvert.
\end{equation*}
This trick requires an invertible monotonic mapping $h(\cdot)$. In a simple case of $$\mathrm{supp}(p(\theta))=[a_1, b_1]\times\cdots\times[a_n, b_n]\subset \mathbb{R}^n,$$ we can take $h(\theta)=\left(\ln(\frac{\theta_1-a_1}{b_1-\theta_1}),\cdots,\ln(\frac{\theta_n-a_n}{b_n-\theta_n})\right)^\top$ to transform $\mathrm{supp}(p(\theta))$ into $\mathbb{R}^n$. In complex cases of $\mathrm{supp}(p(\theta))$, it may be difficult to construct an invertible monotonic mapping $h(\cdot)$. To get rid of this, $h$ can be constructed on a rough region $K\subset \mathbb{R}^n$ such that $\mathrm{supp}(p(\theta))\subset K$, and one continues to combine with the truncation method \cite{apt, deistler2022truncated}. This prevents the density from leaking in regions $\mathbb{R}^n\backslash K$. If the difference between $K$ and $\mathrm{supp}(p)$ is small, it can further prevent a large amount of mass from being leaked outside.

The transformation of parameter space is sometimes necessary since drawing samples from the posterior distribution that lies inside the support of the prior is time-consuming in certain circumstances as it is reported in \citet{glockler2022variational}. 

\section{Experiments} \label{section:experiments}

In this section, we conduct a series of comparative analyses of our proposed strategies with existing methods to demonstrate their effectiveness. Since our strategies are built upon the SNPE-B method, hence we begin with a simulation study. Subsequently, we then compare our methods with other likelihood-free Bayesian computation methods such as SNPE-A \cite{papamakarios2016fast}, APT \cite{apt} and SNL \cite{papamakarios2019sequential}.

We employ neural spline flows (NSFs) \cite{durkan2019neural} as the conditional density estimators, which is highly flexible and has been widely applied in related research \cite{lueckmann2021benchmarking, deistler2022truncated}. Our NSFs follow the structure from the \texttt{SBIBM} library\footnote{\url{https://github.com/sbi-benchmark/sbibm}}, which consists of 5 layers. Each layer is constructed using two residual blocks with 50 units and \texttt{ReLU} activation function. With 10 bins in each monotonic piece-wise rational-quadratic transform, the tail bound is set to 20. Notably, our strategies have no restrictions on the type of $q_{F(x, \phi)}(\theta)$. Other conditional neural densities, such as masked autoregressive flows \citep{papamakarios2017masked} and mixtures of Gaussians \citep{bishop1994mixture, papamakarios2016fast} can also work.

We evaluate the performance of our proposed strategies on five likelihood-free Bayesian models: M/G/1 model \cite{shestopaloff2014bayesian}, Lotka-Volterra model \cite{lotka1920analytical}, SLCP model\cite{papamakarios2019sequential}, Gaussian linear model \citep{lueckmann2021benchmarking}, and state-space model \citep{rodrigues2020likelihood}. Where the previous four models come from the standard benchmark provided by the \texttt{SBIBM} library. Moreover, to test our method in a more complex and realistic case, we examine a state-space model designed for the inference of rental prices with intractable g-and-k distributions \citep{rodrigues2020likelihood}.  
We present numerical results for the M/G/1 and Lotka-Volterra models using synthetic data in Sections \ref{sec:exp_abla} and \ref{sec:exp_compare}, and provide results for the state-space model applied to a real-world rental price dataset in Section \ref{sec:exp_realistic}.
Numerical experiments on the SLCP and Gaussian linear models and additional experimental results are deferred to Appendices~\ref{appendix:model_detail} and \ref{appendix:exp_detail}, respectively.

\textbf{M/G/1 model.} The model describes a single server's processing of a queue of continuously arriving jobs. Define $I$ as the total number of jobs that need to be processed, and denote by $s_i$ the processing time required for job $i$. Let $v_i$ be the job's arrival time in the queue, and $d_i$ be the job's departure time from the queue. They satisfy the following conditions
\begin{align*}
s_i & \sim \mathcal{U} (\theta_1, \theta_1 + \theta_2), \\
v_i - v_{i-1}  & \sim \mathrm{Exp}(\theta_3), \\
d_i - d_{i-1} & = s_i + \max(0, v_i - d_{i-1}).
\end{align*}
In our experiments, we configure $I = 50$ and select the summary statistics $S(x)$ to be the logarithms of 0th, 25th, 50th, 75th and 100th percentiles of the set of inter-departure times. The prior distribution of the parameters is
\begin{align*}
\theta_1 \sim \mathcal{U} (0, 10),\
\theta_2 \sim \mathcal{U} (0, 10),\
\theta_3 \sim \mathcal{U} (0, 1/3),
\end{align*}
and our experiments choose ground truth parameter as $\theta^* = (1,\ 4,\ 0.2)$. The observed summary statistic $S(x_o)$ simulated from the model with ground truth parameter $\theta^*$ is
\begin{equation*}
S(x_o) = (0.0929,\ 0.8333,\ 1.4484,\ 1.9773,\ 3.1510),
\end{equation*}
and the corresponding standard deviation underground truth parameter $\theta^*$ (based on 10,000 simulations) is
\begin{equation*}
s = (0.1049,\ 0.1336,\ 0.1006,\ 0.1893,\ 0.2918).
\end{equation*}

\textbf{Lotka-Volterra model.} 
This model represents a Markov jump process that characterizes the dynamics of a predator population interacting with a prey population, with four parameters denoted as $\theta=(\theta_1,\cdots,\theta_4)$. Let $X$ be the number of predators, and $Y$ be the number of prey. The model posits that the following events can occur:

\begin{itemize}
\item the birth of a predator at a rate $\exp(\theta_1)XY$, resulting in an increase of $X$ by one.
\item the death of a predator at a rate proportional to $\exp(\theta_2)X$, leading to a decrease of $X$ by one.
\item the birth of a prey at a rate proportional to $\exp(\theta_3)Y$, resulting in an increase of $Y$ by one.
\item the consumption of a prey by a predator at a rate proportional to $\exp(\theta_4)XY$, leading to a decrease of $Y$ by one.
\end{itemize}
Following the experimental details in \citet{papamakarios2019sequential}, we initialize the predator and prey populations as $X = 50$ and $Y = 100$ respectively. We perform simulations of the Lotka-Volterra model using the Gillespie algorithm \cite{gillespie1977exact} over a duration of 30 time units, and record the populations at intervals of 0.2 time units, resulting in two time series of 151 values each. The resulting summary statistics $S(x)$ is a 9-dimensional vector that comprises the following time series features: the log mean of each time series; the log variance of each time series; the autocorrelation coefficient of each time series at lags 0.2 and 0.4 time units; the cross-correlation coefficient between the two time series. 

In our experiments, the prior distribution of the parameters is set to $\mathcal{U}(-5, 2)^4$, the ground truth parameters are
\begin{equation*}
\theta^* = (\log 0.01,\  \log 0.5, \  \log 1, \  \log 0.01),
\end{equation*}
the observed summary statistics $S(x_o)$ simulated from the model with ground truth parameters $\theta^*$ are
\begin{equation*}
S(x_o) = (4.6431,\ 4.0170,\ 7.1992,\ 6.6024,\ 0.9765,\ 0.9237,\ 0.9712,\ 0.9078,\ 0.0476),
\end{equation*}
and the corresponding standard deviation under ground truth parameters $\theta^*$ (based on 10,000 simulations) are
\begin{equation*}
s = (0.3294,\ 0.5483,\ 0.6285,\ 0.9639,\ 0.0091,\ 0.0222,\ 0.0107,\ 0.0224,\ 0.1823).
\end{equation*}

To evaluate the degree of similarity between the approximate posterior distribution $q_{F(x_o, \phi)}(\theta)$ and the true posterior distribution $p(\theta|x_o)$ given the observed data, we utilize maximum mean discrepancy (MMD) \cite{gretton2012kernel} and classifier two sample tests (C2ST) \citep{lopez2016revisiting} as a discriminant criterion. As reported in \citet{bischoff2024practical}, both metrics measure the posterior approximation errors, but C2ST is sensitive to subtle differences, potentially revealing nuances that MMD may miss due to kernel dependence. Another criterion for measurement is the log median distance (LMD) between $x_o$ and $x$ drawn from $p(x|\theta)$, where $\theta$ is sampled from $q_{F(x_o, \phi)}(\theta)$. We normalize the calculation of LMD using the standard deviation of each component of the samples due to the profound difference scales across dimensions and models. In scenarios where the true posterior distribution $p(\theta|x_o)$ is intractable even with knowledge of the sample generation process, we employ the negative log probability (NLOG) of the true parameters $\theta^*$. In this case, the observation $x_o$ is sampled from $p(x|\theta^*)$. All four metrics are the lower the better. We provide a caveat that these metrics suffer from various pitfalls and refer readers to \cite{lueckmann2021benchmarking,hermans2021trust, bischoff2024practical} for a detailed discussion. We also report the result by plotting the posterior marginal in Appendix \ref{appendix:exp_detail}.

Our numerical experiments were conducted on a computer equipped with a single GeForce RTX 2080s GPU and an i9-9900K CPU. The training and inference processes of the model were primarily implemented with \href{https://pytorch.org/}{\texttt{Pytorch}}.

In the training process, we find that a batch size of 1000, a sample size of $N = 1000$, and a total of $R = 20$ round generally yield the best results across all methods. Therefore, we fix these values for all experiments. We set $\alpha=0.2$ in the defensive density \eqref{eq:defensive} and  $\gamma=0.5$ in the ESS. In each round, we randomly pick 5\% of the dataset as the validation set. We follow the early stop criterion proposed by \citet{papamakarios2019sequential}, which terminates the training if the loss value on the validation data does not decrease after 20 epochs in a single round. For the stochastic gradient descent optimizer, we use \texttt{Adam} \cite{kingma2014adam} with a learning rate of $1 \times 10^{-4}$, and a weight decay of $1 \times 10^{-4}$. We report these metrics with error bars representing the mean with the upper and lower quarterlies after 50 times of repetition, where different methods were employed with the same seed.

\subsection{Simulation study on the SNPE-B method}\label{sec:exp_abla}

\begin{figure}[!t]
\begin{subfigure}[t]{1.00\textwidth}
  \vspace{0pt}
    \includegraphics[width=1.00\textwidth]{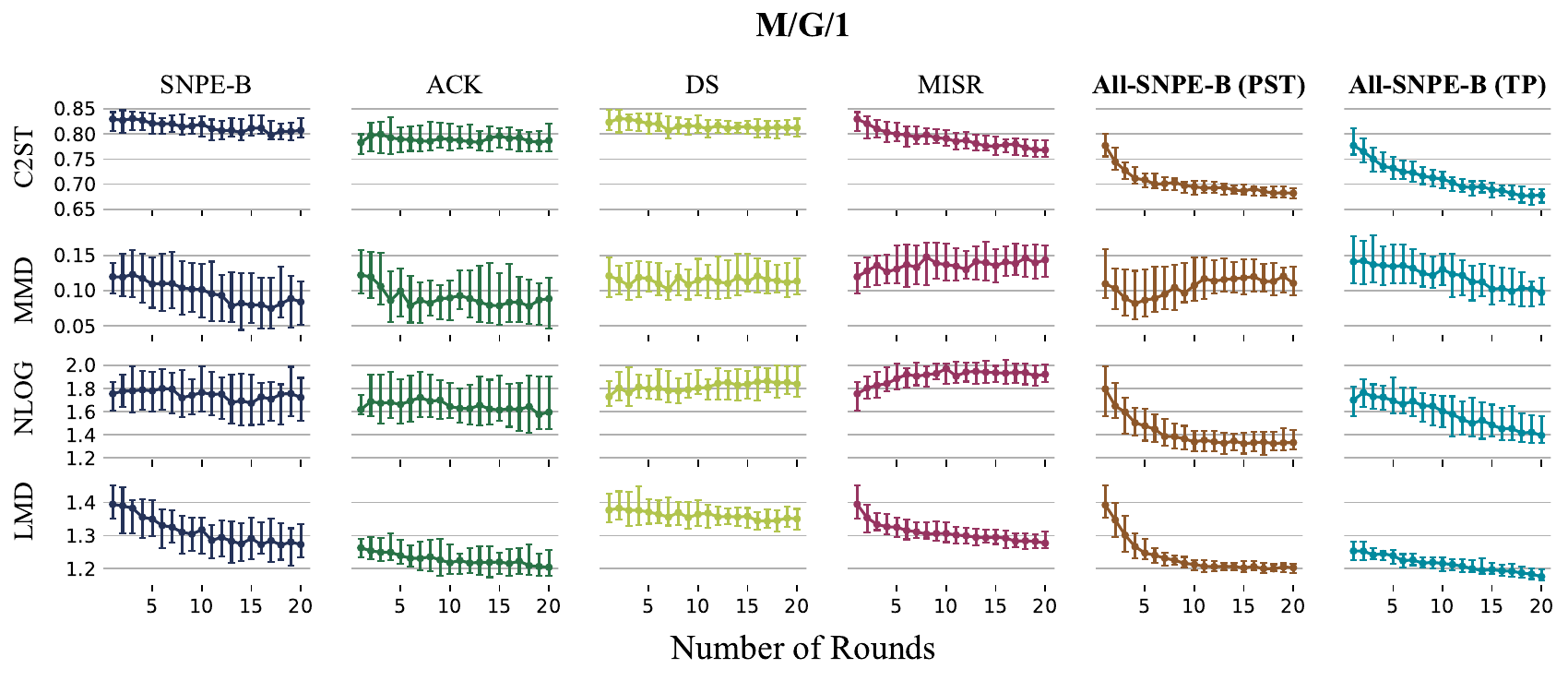}
\end{subfigure}
\hspace{-1.01\textwidth}
\begin{subfigure}[t]{0.02\textwidth}
  \vspace{5pt}
    \textbf{A}
\end{subfigure}

\begin{subfigure}[t]{1.00\textwidth}
  \vspace{0pt}
    \includegraphics[width=1.00\textwidth]{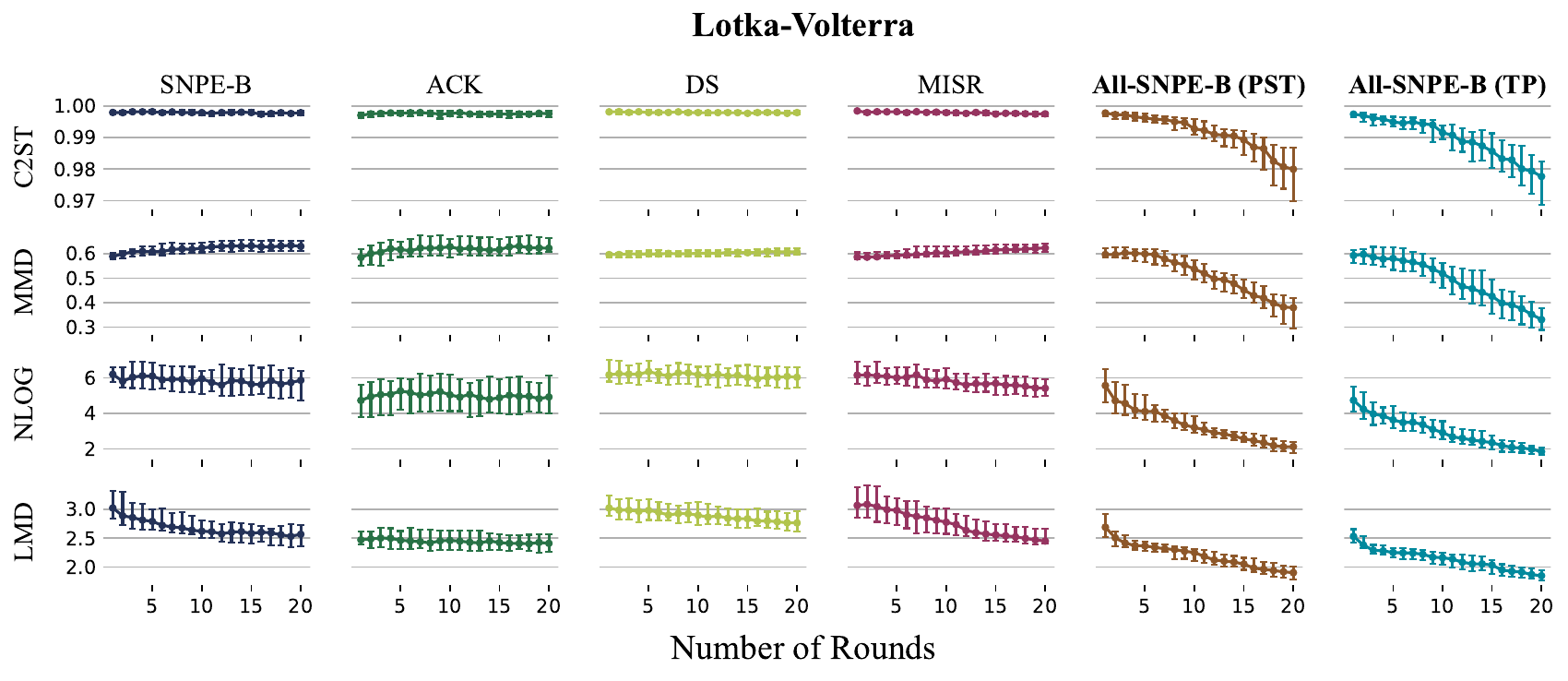}
\end{subfigure}
\hspace{-1.01\textwidth}
\begin{subfigure}[t]{0.02\textwidth}
  \vspace{5pt}
    \textbf{B}  
\end{subfigure}
\caption{\textbf{Simulation experiments on proposed strategies.} \textbf{A.} Performance on M/G/1 queuing model. \textbf{B.} Performance on Lotka-Volterra predation model. The horizontal axis represents the round of training and the error bars represent the mean with the upper and lower quarterlies. Our proposed method exhibits better performance compared to the original method.}
\label{fig:valid_nsf}
\end{figure}

In this section, we provide a detailed comparison of the performance of our proposed strategies, including adaptive calibration kernels (ACK), defensive sampling (DS), and multiple importance sampling and recycling (MISR) methods with the standard SNPE-B method. In some cases, the support of the ground truth posterior is bounded, while the neural posterior always has an unbounded support. To circumvent this, we consider two strategies to account for the inconsistency of support. The first one is known as truncated posterior (TP), in which we directly truncate the neural posterior after training, this method suffers from low sampling efficiency and leakage of mass. The second one is the parameter space transformation (PST), which is detailed in Section \ref{sec:pst}.
To study the difference between TP and PST, we additionally examine these two cases with all other strategies in our experiment, which are represented as `All-SNPE-B (TP)' and `All-SNPE-B (PST)', respectively. `SPNE-B' in the following figures stands for the standard SNPE-B method without taking any strategies we proposed.

Figure \ref{fig:valid_nsf} shows the results of the simulation experiments. The utilization of the ACK method effectively reduces LMD, which is predictable since the calibration kernel directly assigns weight to the sample pair $(\theta,x)$ based on the distance between $x_o$ and $x$. Nevertheless, it does not perform well on other metrics and still suffers from unstable learning. Surprisingly, the introduction of the calibration kernel can sometimes exacerbate this instability. DS and MISR indeed reduce the variance and thus narrow the error bar. However, they tend to lead to a loss in performance. There are two reasons for this occurrence. On the one hand, defensive distribution brings in irrelevant samples, making the training samples not sufficiently concentrated. On the other hand, MISR is prone to including `bad' samples from earlier rounds, which also results in lower sample quality.

As analyzed, the use of most of these methods individually does not yield an increased performance. This is because when one problem is addressed, another emerges. For instance, MISR and DS are able to reduce the variance but lead to low sample quality. On the other hand, ACK can help to concentrate the samples, but it may render estimators with large variances. However, combining all of our proposed strategies is amenable  to the SNPE-B method, which is denoted as `All-SNPE-B'.

Finally, PST causes a transformation of the target compared with TP. After incorporating this strategy, no significant negative impact on the performance was observed. Moreover, the efficiency of sampling can be improved. We thus prefer to apply PST rather than TP. Its improvement towards APT is reported in \citet{deistler2022truncated}. In some cases (especially M/G/1 in our experiment, see Figure \ref{fig:compare}), omitting PST may fail training (even with the repetition of 50 runs).

\subsection{Comparisons with other likelihood-free inference methods}\label{sec:exp_compare}
In this section, we provide a comparison between other existing likelihood-free inference methods, including SNPE-A, SNL, APT and our proposed All-SNPE-B. Additionally, to demonstrate the versatility of our proposed ACK technique, we propose the ACK-APT method, which is attained by directly plugging ACK as weight in the loss of APT. Since the original version of the APT and SNPE-B methods does not contain the PST technique, we also experiment with PST and TP versions of APT, All-SNPE-B, and ACK-APT on all tasks.
The results are presented separately in Figure \ref{fig:compare} and Figure \ref{fig:compare_appendix}.

First of all, we want to report that our proposed All-SNPE-B reduces computational costs (total iteration step times the number of samples used in each iteration) by nearly 6 times compared to APT as demonstrated in Table \ref{tab:cost}, while still maintaining comparable performance in most metrics and most tasks. Additionally, the All-SNPE-B method has a faster convergence rate in earlier rounds compared to other methods. Secondly, the ACK-APT method successfully outperforms APT in all cases as observed from the metrics, and with almost no additional computational cost, increasing the attractiveness of our methods. Thirdly, some pitfalls of metrics are observed in the case of SLCP (in panel B of Figure \ref{fig:valid_appendix}), where C2ST of All-SNPE-B indicates its failure compared to other methods while still maintaining the best LMD performance. This is due to the proposed ACK, which assigns a higher weight to those sample pair $(\theta,x)$ with $x$ close to $x_o$. Such behavior is favored by LMD and therefore exhibits superior performance. Visualizations of the marginal approximated posteriors are provided in Figure \ref{fig:post_compare}.

\begin{figure}[!t]
\hspace{0.02\textwidth}
\begin{subfigure}[t]{0.94\textwidth}
  \vspace{0pt}
    \includegraphics[width=1.00\textwidth]{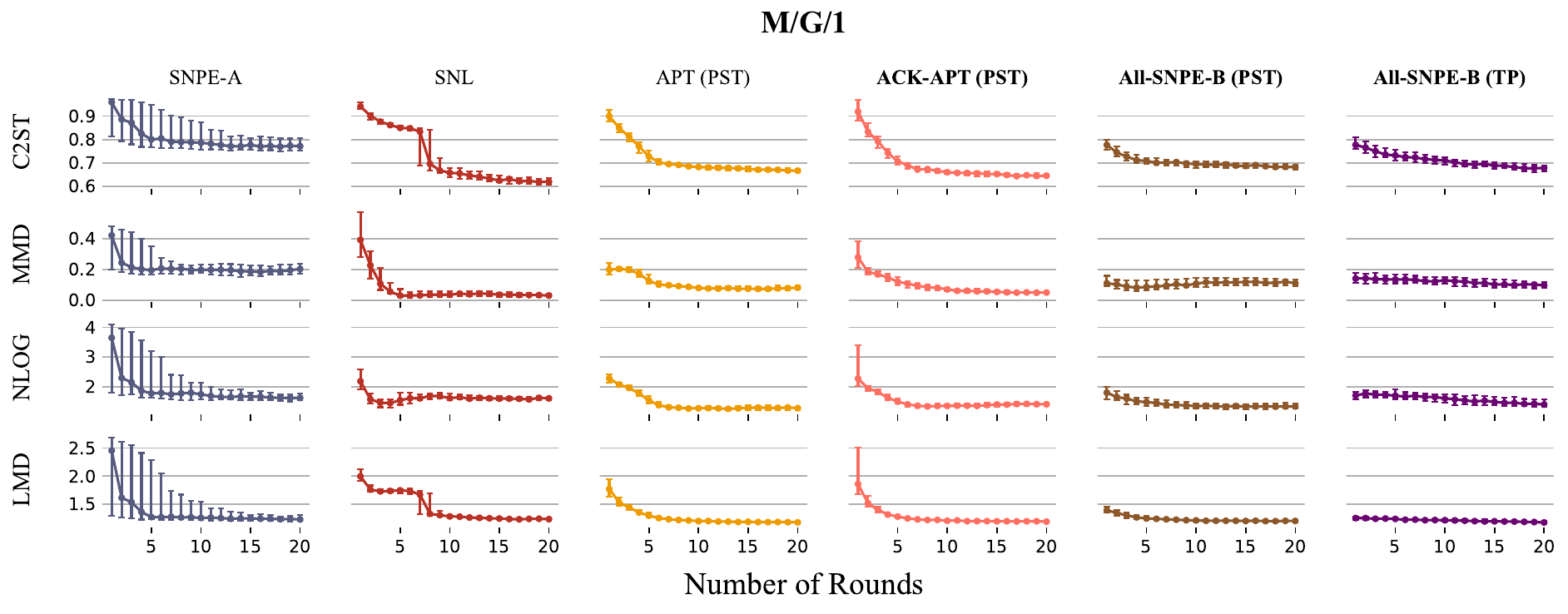}
\end{subfigure}
\hspace{-0.96\textwidth}
\begin{subfigure}[t]{0.02\textwidth}
  \vspace{5pt}
    \textbf{A}
\end{subfigure}

\hspace{0.02\textwidth}
\begin{subfigure}[t]{0.94\textwidth}
  \vspace{0pt}
    \includegraphics[width=1.00\textwidth]{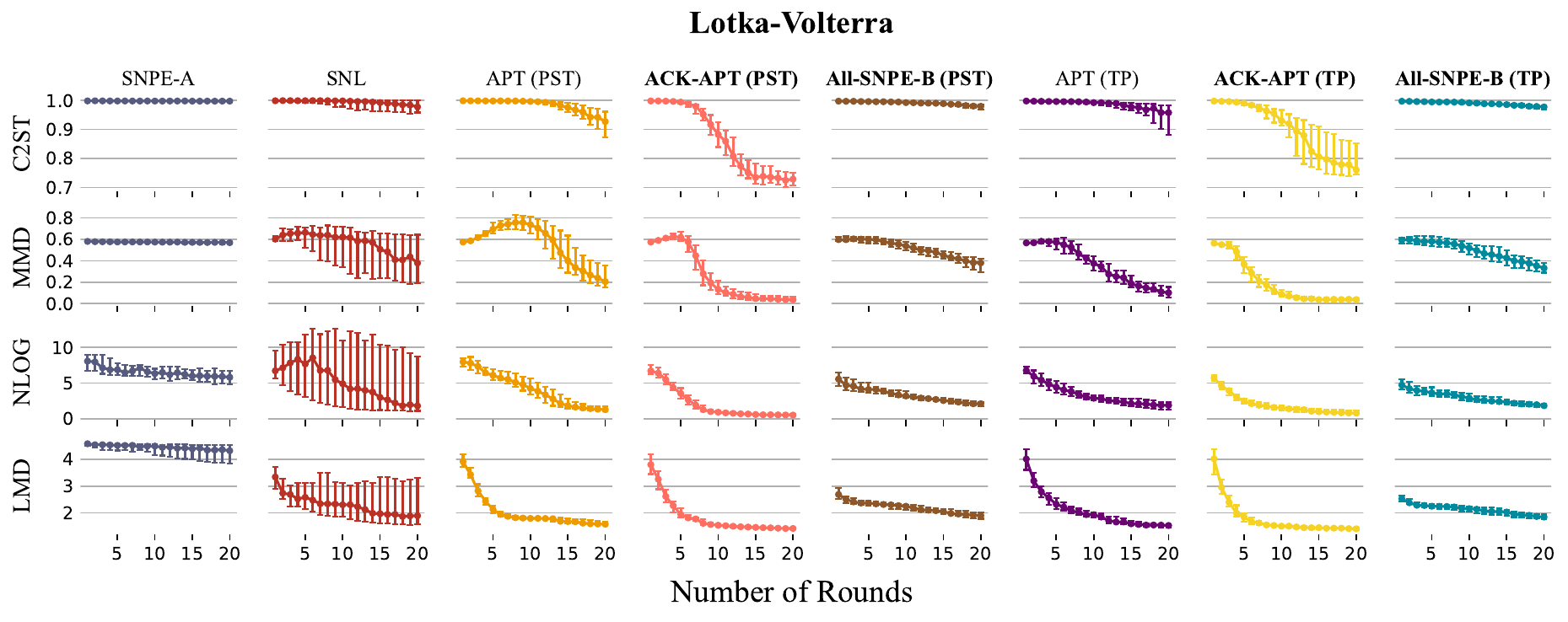}
\end{subfigure}
\hspace{-0.96\textwidth}
\begin{subfigure}[t]{0.02\textwidth}
  \vspace{5pt}
    \textbf{B}  
\end{subfigure}
\caption{\textbf{Our proposed strategies (All-SNPE-B and ACK-APT) versus other methods.} \textbf{A.} Performance on the M/G/1 queuing model. \textbf{B.} Performance on the Lotka-Volterra model. The horizontal axis represents the round of training and the error bars represent the mean with the upper and lower quarterlies.}
\label{fig:compare}
\end{figure}

\begin{table}[t]
\centering
\caption{Comparison of computational cost$^{1}$. The values represent the mean and standard deviation.}
\label{tab:cost}
\footnotesize
\begin{tabular}{@{}lccccc@{}}
\toprule
Task & M/G/1 & Lotka-Volterra & SLCP & Gaussian linear\\
\midrule
SNPE-B (TP) & 1.76 (0.26) & 1.62 (0.34) & 1.66 (0.51) & 1.92 (1.38) \\
SNPE-B (PST) & 1.64 (0.47) & 1.73 (0.37) & 1.74 (0.48) & 1.52 (0.29) \\
All-SNPE-B (TP) & 2.87 (0.24) & 2.42 (0.29) & 1.80 (0.23) & 1.12 (0.09) \\
All-SNPE-B (PST) & 1.95 (0.15) & 2.35 (0.29) & 1.85 (0.26) & 1.46 (0.12) \\
APT (PST) & 11.06 (1.02) & 12.96 (1.08) & 12.97 (1.01) & 10.04 (1.39) \\
ACK-APT (PST) & 10.78 (1.09) & 13.34 (1.08) & 12.71 (1.03) & 11.73 (1.22) \\
\bottomrule
\end{tabular}
\vspace{1ex}

{\raggedright $^{1}$The cost is quantified as the number of forward passes through the neural network per 1,000 instances.\par}
\end{table}
We refer the reader to Figure \ref{fig:post_compare} for a clear understanding of the advantages of All-SNPE-B in seeking modes. We assign this credit to ACK since it helps the neural network recognize the importance of samples according to the distance between $\mathbf{x}_o$ and its observation. We report a better approximation around the mode and the ground-truth distribution with the marginal plot. A similar result is also observed in the case of ACK-APT, which shows consistency instead of divergence. For example, Figure \ref{fig:post_compare}-B demonstrates the significant improvement of ACK over APT.

Lastly, our proposed method outperforms the traditional SMC-ABC method under a fixed computational budget. Figure \ref{fig:perf_smc} illustrates the outcomes of the SMC-ABC method applied to the two models under the same performance metrics. The results indicate that, after 20 iterations ($2\times 10^4$ simulations), our method achieves outcomes that are comparable to, or better than, those obtained by the SMC-ABC method after approximately $5 \times 10^5$ simulations.

\begin{figure}[htbp]
\hspace{0.14\textwidth}
\begin{subfigure}[t]{0.02\textwidth}
  \vspace{0pt}
    \textbf{A}
\end{subfigure}
\begin{subfigure}[t]{0.70\textwidth}
  \vspace{0pt}
    \includegraphics[width=1.00\textwidth]{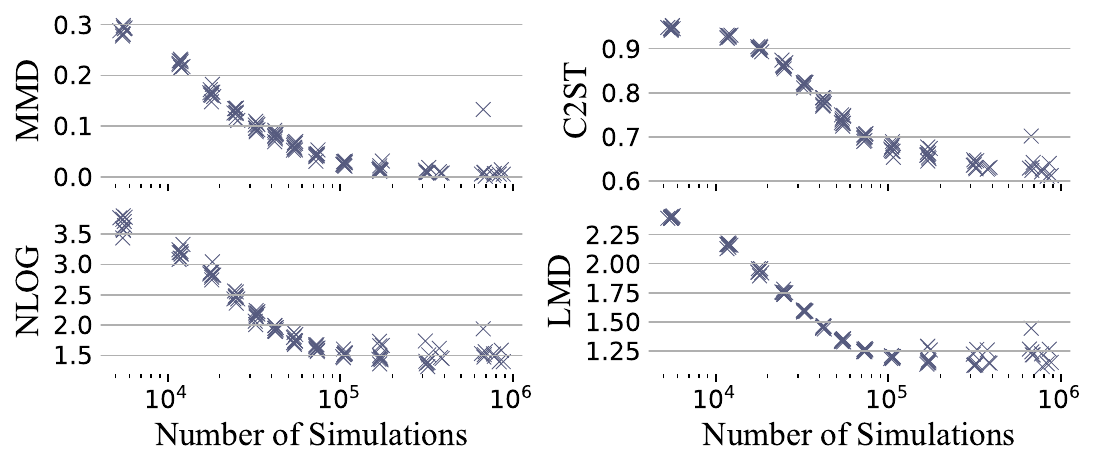}
\end{subfigure}
    
\hspace{0.14\textwidth}
\begin{subfigure}[t]{0.02\textwidth}
  \vspace{3pt}
    \textbf{B}  
\end{subfigure}
\begin{subfigure}[t]{0.70\textwidth}
  \vspace{0pt}
    \includegraphics[width=1.00\textwidth]{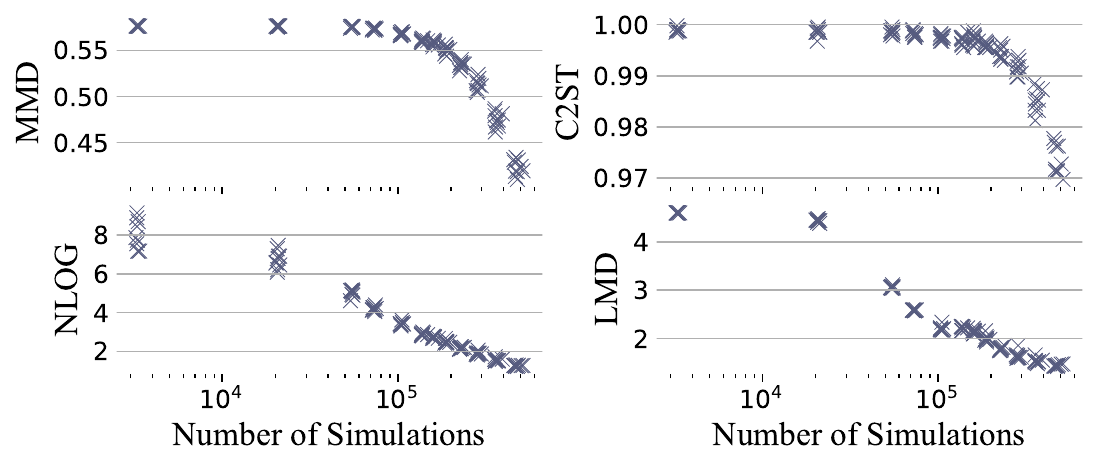}
\end{subfigure}
\caption{\textbf{Approximation accuracy by SMC-ABC method against the number of simulations.} \textbf{A.} Performance on the M/G/1 queuing model. \textbf{B.} Performance on the Lotka-Volterra model.}
\label{fig:perf_smc}
\end{figure}

\subsection{Inference on high-dimensional state-space model with real-world dataset.} \label{sec:exp_realistic}

We conduct an analysis on a high-dimensional time series dataset comprising Airbnb rental prices in Seattle, WA, USA, for the year 2016. The dataset, obtained from \textit{kaggle.com}, contains 928,151 entries spanning $T = 365$ days, with each record representing the rental price of an available listed property or space on a given day. As indicated by \citet{rodrigues2020likelihood}, the price distribution for each day $t$, denoted as $y_1^{(t)}, y_2^{(t)}, \cdots, y_{n_t}^{(t)}$, is non-Gaussian even after transformation. To model this distribution, we adopt a flexible g-and-k distribution \citep{rayner2002numerical}, which can effectively capture the complex characteristics of the data. Although the g-and-k distribution does not have a closed-form density function, it can be described through its quantile function as follows:
\begin{equation}\label{eq:gandk_quantile}
Q(q|\beta) = A + B \left[ 1 + 0.8 \frac{1 - \exp\{-g z(q)\}}{1 + \exp\{-g z(q)\}} \right] \left( 1 + z(q)^2 \right)^k z(q),
\end{equation}
where $\beta = (A, B, g, k)^\top$ is the model parameter, with $B > 0$ and $k > -1/2$, and $z(q) = \Phi^{-1}(q)$ represents the inverse CDF of a standard normal distribution $\mathcal{N}(0, 1)$. Notably, when both $g$ and $k$ are set to zero, the distribution reduces to a normal distribution.

Following the approach of \citet{rodrigues2020likelihood}, we employ an intractable nonlinear state-space model to fit the observed data:
\begin{align*}\label{eq:price_sys_eq}
y_1^{(t)},y_2^{(t)},\cdots,y_{n_t}^{(t)} &\sim p(y \mid \beta_t), \\
\varphi(\beta_t) &= \lambda_t = F_t^\top \theta_t, \\
\theta_{t+1} &= G_t \theta_t + w_t,
\end{align*}
where $y_i^{(t)}$ denotes the standardized price observed at time $t$ and $\lambda_t = (\lambda_{1,t},\dots, \lambda_{4, t})^\top$. The function $\varphi$ serves as the link function, and $\varphi^{-1}(\lambda_t) = (\lambda_{1, t}, \exp(\lambda_{2, t}), \lambda_{3, t}, \exp(\lambda_{4, t}) - 0.5)^\top = \beta_t$ represents the four parameters of the g-and-k distribution. The matrix $F_t \in \mathbb{R}^{36 \times 4}$ is the design matrix that maps the state vector $\theta_t$ to the linear predictor $\lambda_t$, while the state vector $\theta_t \in \mathbb{R}^{36\times 1}$ corresponds to the constant term, local linear trend, weekly seasonal effect, and summer effect for each component in $\beta_t$. The evolution matrix $G_t \in \mathbb{R}^{36 \times 36}$ dictates the system's dynamics, and $w_t \sim N(0, W_t)$ represents the process noise. Detailed specifications of the model are provided in Appendix \ref{appendix:model_detail}.

To reduce the output dimensionality, we take a summary statistic $s_t = \varphi(\hat{\beta}_t)$ for each day, where $\hat{\beta}_t$ is the $L$-moments estimator of $\beta_t$, derived from the daily observations $y_1^{(t)}, y_2^{(t)}, \cdots, y_{n_t}^{(t)}$. Although the $L$-moments estimator is not fully sufficient, it is highly informative and provides an almost unbiased estimate across all sample sizes and parameter settings \citep{peters2016estimating}. Consequently, the summary statistics for the entire dataset have a dimensionality of $4T$, which, in the case of the rental price data for one year, amounts to a total dimension of 1460.

In Figure \ref{fig:state_model_perf}-A, we present the LMD values for several methods, including SNPE-A, SNPE-B, APT, as well as our proposed All-SNPE-B and ACK-APT. We did not include SNL in this evaluation due to the high dimensionality of the observations, which makes MCMC sampling impractical during each iteration. The results demonstrate that methods based on the ACK strategy significantly reduce the LMD values, and our proposed All-SNPE-B method consistently achieves the lowest LMD values across all methods. 
We emphasize that importance sampling often suffers from performance degradation in high-dimensional problems \citep{owen2000safe}, but the MISR strategy mitigates this to some extent by increasing the sample size, thereby reducing variance in high-dimensional settings. In the case of APT, while it avoids incorporating a direct density ratio in the loss function, approximating its normalization constant $Z(x, \phi)$ (which is an integral over the parameter space $\Theta$) in Eq.\eqref{eq:loss_apt} using $M \ll N$ atoms can introduce higher variance, particularly in high-dimensional cases when only a limited number of samples are used. We also report the posterior marginal density plots over 10 runs in Figure \ref{fig:state_marginal}-A.

Finally, we validate our findings with simulated data. We fix the true model parameters and generate observations $x_o$ accordingly, enabling the calculation of the NLOG metric. The simulation period was set to 4 weeks, with $T = 28$, resulting in a summary statistic dimensionality of $d = 112$. In Figure \ref{fig:state_model_perf}-B, we present the NLOG and LMD values for each method. Additionally, we report the posterior marginal density plots over 10 runs in Figure \ref{fig:state_marginal}-B. Our method successfully captures the true model parameters, whereas other methods fail (notably, including APT) in this scenario.
The results show that the All-SNPE-B method achieves consistently remarkable performance, while the ACK-APT method, though less stable, still outperforms the other baseline methods.
The challenges faced by SNPE-B and APT arise from the high dimensionality of the sample space, even after incorporating summary statistics. In such settings, conditional density estimators struggle to focus on the sparse and highly informative training samples near $x_o$. The ACK strategy addresses this by prioritizing these critical regions, rather than attempting to approximate the entire posterior distribution across the full parameter space.

\begin{figure}[tp]
\hspace{0.02\textwidth}
\begin{subfigure}[t]{0.96\textwidth}
  \vspace{0pt}
    \includegraphics[width=1.00\textwidth]{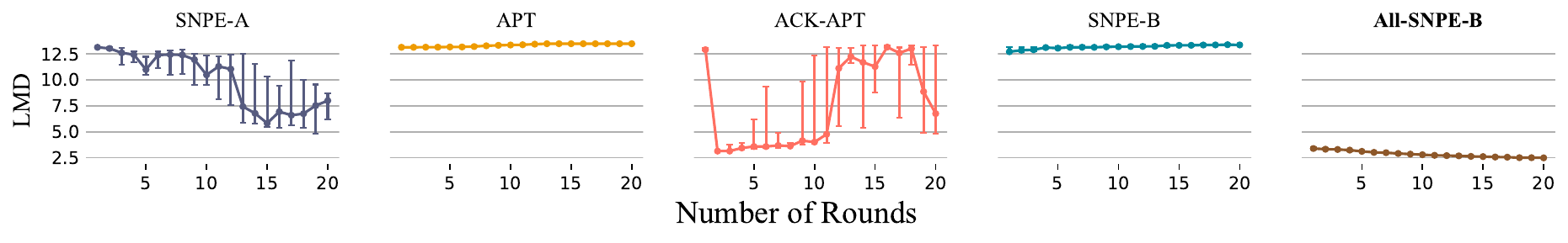}
\end{subfigure}
\hspace{-0.99\textwidth}
\begin{subfigure}[t]{0.02\textwidth}
  \vspace{3pt}
    \textbf{A}  
\end{subfigure}

\hspace{0.02\textwidth}
\begin{subfigure}[t]{0.96\textwidth}
  \vspace{0pt}
    \includegraphics[width=1.00\textwidth]{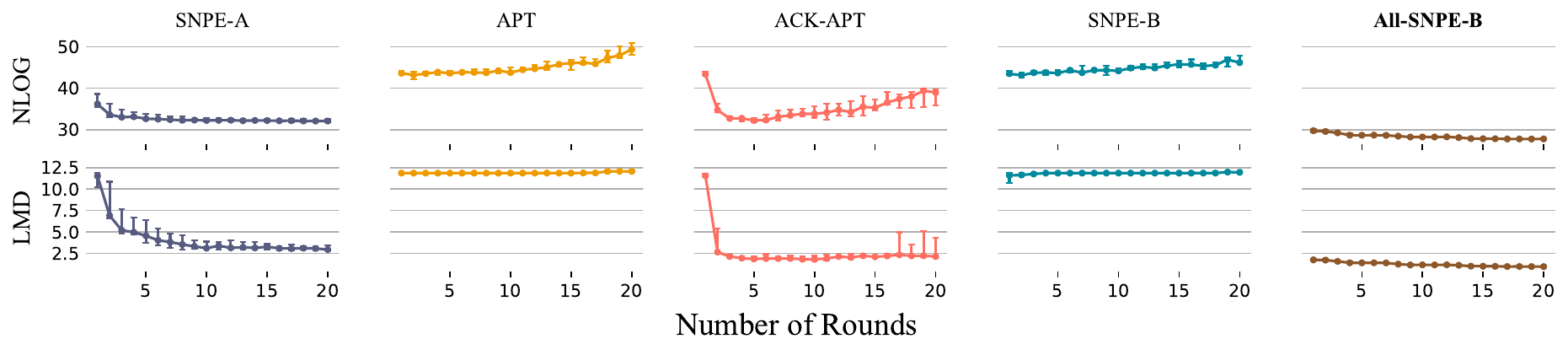}
\end{subfigure}
\hspace{-0.99\textwidth}
\begin{subfigure}[t]{0.02\textwidth}
  \vspace{3pt}
    \textbf{B}  
\end{subfigure}

\caption{Performance of different methods on the state-space model. \textbf{A.} Inference on one year of Seattle rental price data ($T = 365$). \textbf{B.} Inference on four weeks of simulated data ($T = 28$).
The horizontal axis represents the round of training and the error bars represent the mean with the upper and lower quarterlies.}
\label{fig:state_model_perf}
\end{figure}

\begin{figure}[t]
\hspace{0.00\textwidth}
\begin{subfigure}[t]{0.02\textwidth}
  \vspace{1pt}
    \textbf{A}  
\end{subfigure}
\begin{subfigure}[t]{0.94\textwidth}
  \vspace{0pt}
    \includegraphics[width=\textwidth]{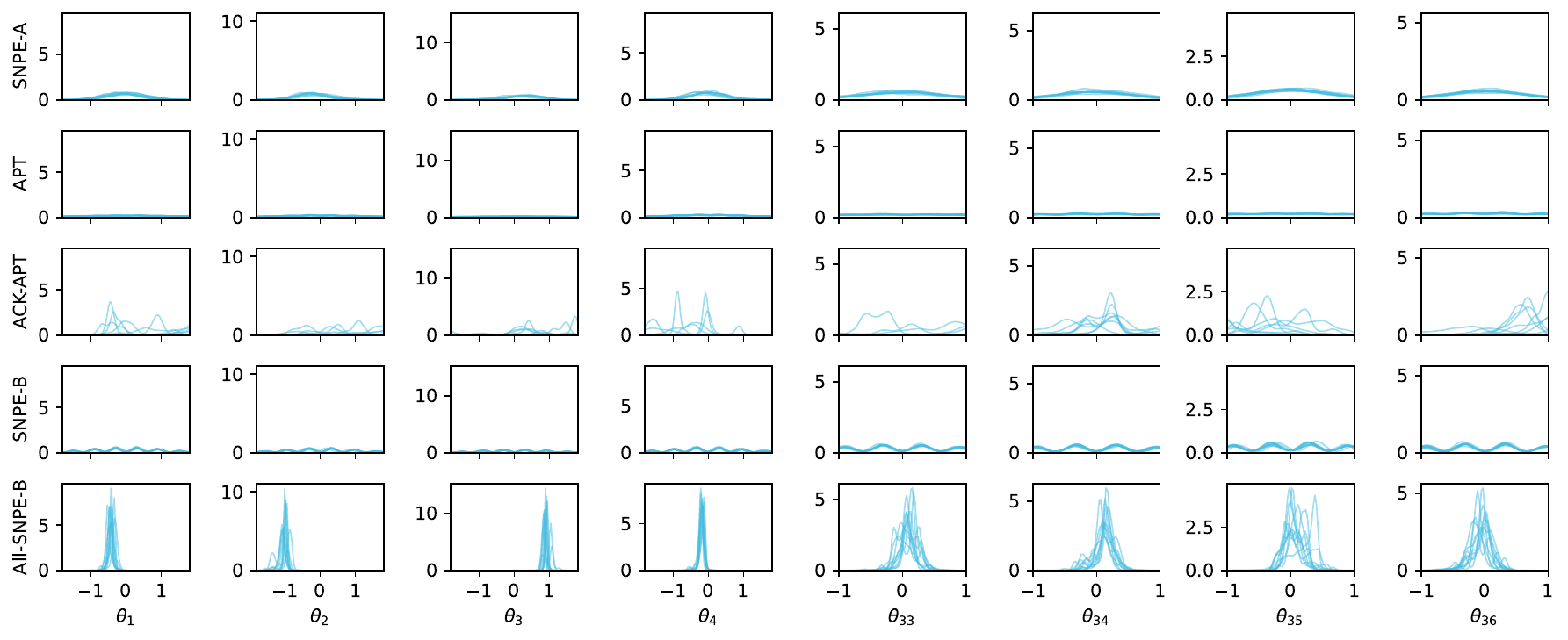}
\end{subfigure}

\hspace{0.00\textwidth}
\begin{subfigure}[t]{0.02\textwidth}
  \vspace{5pt}
    \textbf{B}  
\end{subfigure}
\begin{subfigure}[t]{0.94\textwidth}
  \vspace{0pt}
    \includegraphics[width=\textwidth]{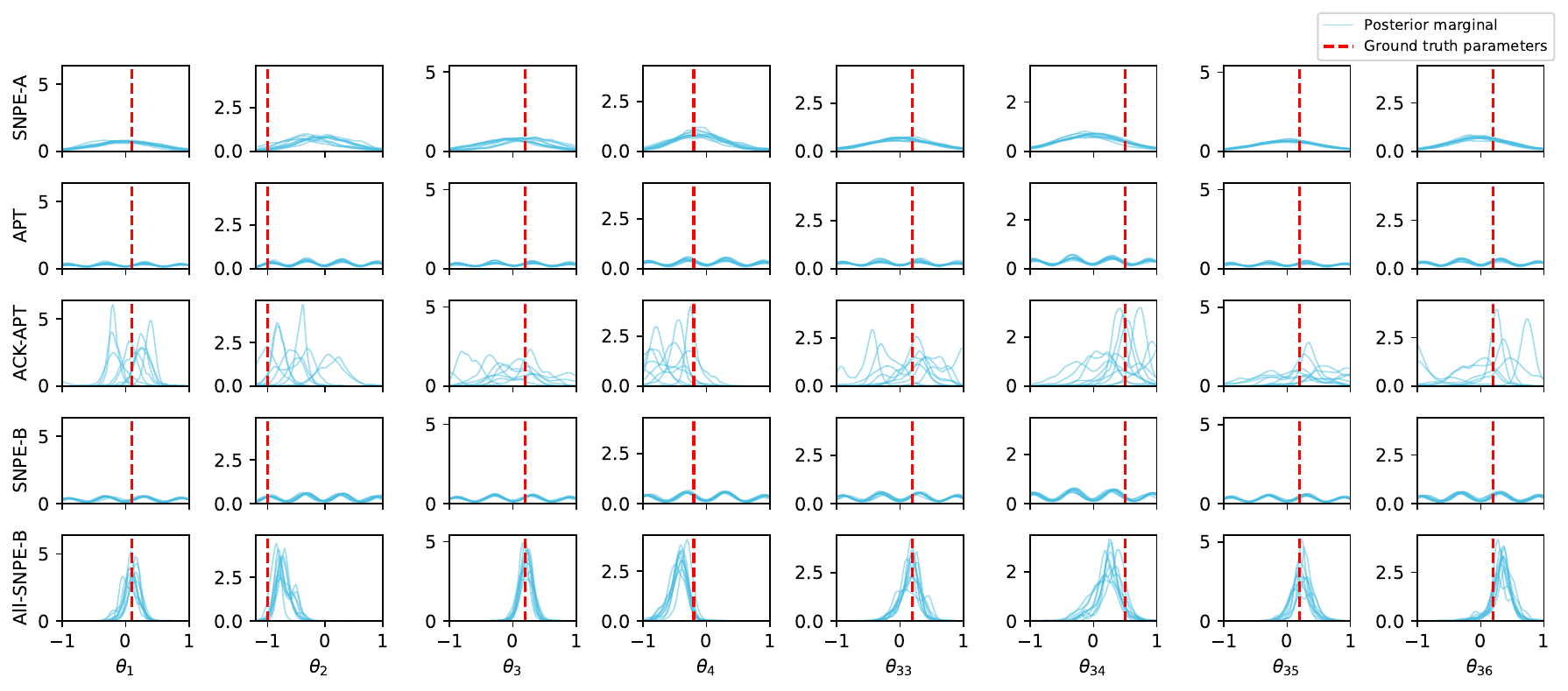}
\end{subfigure}

\caption{\textbf{Comparison of posterior densities evaluating at $x_o$. } 
\textbf{A.} Inference on one year of Seattle rental price data ($T=365$). 
\textbf{B.}  Inference on four weeks of simulated data ($T=28$). 
Our proposed methods result in valid posterior information under limited simulations.}
\label{fig:state_marginal} 
\end{figure}

\section{Discussion} \label{section:discuss}

This paper mainly focuses on several variance reduction techniques inspired by classic importance sampling methods, such as defensive sampling and multiple importance sampling. In contrast to the SNPE-A and APT methods, the performance of the SNPE-B method may sometimes be sub-optimal due to the additional density ratio attached to its loss function, resulting in a larger variance of the loss function and its gradient during the training process. Our experiments indicate that the large variance makes the calibration kernel inapplicable, leading to low data efficiency. The low data efficiency could lead to inferior approximation of the density estimators. The SNPE-B method achieves significant improvement by simultaneously applying variance reduction techniques and ACK methods.

Since most of the NPE methods can be finally broken down to the minimization of the KL divergence, the strategies we have proposed are transferable to other methods. For instance, the ACK strategy can be extended to the APT method and SNRE method to improve the training in the early rounds. The PST strategy can also be applied to the APT method to avoid the issue of density leakage. Unlike the SNPE-B method, APT requires all $\theta$ to be within the support of the prior distribution. Introducing the PST method can avoid additional steps related to truncation through rejection sampling and estimating the normalization factor. PST is necessary in cases like M/G/1, where TP can lead to failure of convergence, while in cases like the Lotka-Volterra model and SLCP models, incorporating PST can provide extra benefit in performance.
Notably, ACK demonstrates impressive ability in helping the density estimator to converge correctly, which coincides with our motivation for designing it. We also performed our tests in a high dimensional state-space model where the widely used APT fails, demonstrating the superiority of our proposed method and the potential of SNPE-B. 

It is worth mentioning that the proposed form of calibration kernels in SNPE-B \cite{lueckmann2017flexible}, is adaptive since it depends on the Gaussian proposal distribution, which updates with rounds. Our proposed method is also adaptive since the calibration kernel rate updates according to solving an equation related to ESS. 

\section*{Acknowledgments}
We greatly appreciate two anonymous referees' valuable suggestions and comments on improving this paper. We thank Professor Jakob Macke for the helpful discussions.
The work of the first author and third authors was supported by the National Natural Science Foundation of China (No.12171454, U19B2940) and Fundamental Research Funds for the Central Universities. The
work of the fourth author was supported by the National Natural Science Foundation of China (No.12071154), the Guangdong Basic and Applied Basic Research Foundation (No.2024A1515011876).

\bibliography{refs}

\appendix

\section{Proofs of Theorem \ref{thm:calib_analysis}} \label{appendix:calib_analysis}

\begin{proof}
The effect of $\tau$ can be deduced from the following equations
\begin{equation*}\begin{aligned}
\mu(\tau) & =\mathbb{E}_{\tilde{p}(\theta, x)}\left[K_\tau\left(x, x_o\right) g(\theta, x)\right]\\
& =\int K_\tau\left(x, x_o\right)\left(\int g(\theta, x) \tilde{p}(\theta, x) \mrd \theta\right) \mrd x\\
& =\int K_\tau\left(x, x_o\right) h_g(x) \mrd x \\
& =\int(2 \pi)^{-d / 2} |\Sigma|^{-1/2} \tau^{-d} \exp \left(-(x-x_o)^\top\Sigma^{-1}(x-x_o) /\left(2 \tau^2\right)\right) h_g(x) d x \\
& =\int(2 \pi)^{-d / 2} |\Sigma|^{-1/2} \exp \left(-x^\top\Sigma^{-1} x / 2\right) h_g\left(x_o+\tau x\right) \mrd x \\
& =\int(2 \pi)^{-d / 2} |\Sigma|^{-1/2} \exp \left(-x^\top\Sigma^{-1} x / 2\right)\left(h_g\left(x_o\right)+\tau x^T \nabla_x h_g\left(x_o\right)+\mathcal{O}\left(\tau^2\right)\right) \mrd x \\
& =h_g\left(x_o\right)+\mathcal{O}\left(\tau^2\right),
\end{aligned}\end{equation*}
where $h_g(x):=\int g(\theta, x)\tilde p(\theta, x)\mrd\theta$. On the other hand, 
\begin{equation*}\begin{aligned}
V(\tau) & =\operatorname{Var}_{\tilde{p}(\theta, x)}\left[K_\tau\left(x, x_o\right) g(\theta, x)\right] \\
& =\mathbb{E}_{\tilde{p}(\theta, x)}\left[K_\tau^2\left(x, x_o\right) g^2(\theta, x)\right]-\left(\mathbb{E}_{\tilde{p}(\theta, x)}\left[K_\tau\left(x, x_o\right) g(\theta, x)\right]\right)^2 \\
& =\int K_\tau^2\left(x, x_o\right) h_{g^2}(x) d x-\left(h_g\left(x_o\right)+\mathcal{O}\left(\tau^2\right)\right)^2 \\
& =\int(2 \pi)^{-d}  |\Sigma|^{-1} \tau^{-2 d} \exp \left(-(x-x_o)^\top\Sigma^{-1}(x-x_o) / \tau^2\right) h_{g^2}(x) \mrd x-\left(h_g\left(x_o\right)+\mathcal{O}\left(\tau^2\right)\right)^2\\
& =(2 \pi)^{-d / 2} 2^{-d / 2} |\Sigma|^{-1/2} \tau^{-d} \int(2 \pi)^{-d / 2} |\Sigma|^{-1/2} \exp \left(-x^\top\Sigma^{-1}x / 2\right) \\
& \quad\left(h_{g^2}\left(x_o\right)+\sqrt{2} \tau x^T \nabla_x h_{g^2}\left(x_o\right)+\mathcal{O}\left(\tau^2\right)\right) \mrd x -\left(h_g\left(x_o\right)+\mathcal{O}\left(\tau^2\right)\right)^2 \\
& =(2 \pi)^{-d / 2} 2^{-d / 2} |\Sigma|^{-1/2} \tau^{-d} h_{g^2}\left(x_o\right)+\mathcal{O}\left(\tau^{2-d}\right)-\left(h_g\left(x_o\right)+\mathcal{O}\left(\tau^2\right)\right)^2 \\
& =C h_{g^2}\left(x_o\right) \tau^{-d}+\mathcal{O}\left(\tau^{2-d}\right),
\end{aligned}\end{equation*}
where $C=(2 \pi)^{-d / 2} 2^{-d / 2}|\Sigma|^{-1/2}$.
\end{proof}

\section{Summary statistics-based dimensional reduction} \label{appendix:summary_stat}
Using summary statistics enables us to reduce the dimension of the data being processed, by calculating relevant features, which in turn limits the impact of high-dimensional features that may be noisy or irrelevant. By doing so, we can mitigate the adverse effects of the curse of dimension and achieve more efficient and effective computations. The kernel function that incorporates the summary statistics is then
\begin{equation*}
K^S_\tau(x, x_o)=(2\pi)^{-d_S/2}\tau^{-d_S}\exp\left(-(S(x)-S_o)^\top\Sigma_S^{-1}(S(x)-S_o)/(2\tau^2)\right),
\end{equation*}
where $d_S=\mathrm{dim}(S)\ll d=\mathrm{dim}(x)$ and $\Sigma_S$ is estimated by $\hat\Sigma_S=\frac{1}{N-1}\sum_{i=1}^N(S(x_i)-\bar{S})(S(x_i)-\bar{S})^\top$, $\bar S=\frac{1}{N}\sum_{i=1}^N S(x_i)$ and $S_o=S(x_o)$. If the summary statistics are used, we also can use $p(\theta|S_o)$ to present $p(\theta|x_o)$, and both are equivalent when $S=S(x)$ is sufficient for $\theta$. The corresponding algorithm is given in Algorithm \ref{alg:summary_1}.
\begin{algorithm}
\caption{Summary statistics-based dimensional reduction method}\label{alg:summary_1}
\begin{algorithmic}[1]
\State Initialization: $\tilde p(\theta) := p(\theta)$, given $\alpha\in(0, 1)$, number of rounds $R$, simulations per round $N$, given summary statistics $S(x)$, setting $S_o=S(x_o)$
\For{$r=1,2,\cdots,R$}
\State sample $\{\theta_i\}_{i=1}^N$ from $\tilde p(\theta)$
\State sample $x_i\sim p(x|\theta_i),i=1,2,\cdots,N$, resulting in $\{x_i\}_{i=1}^N$
\State update $\phi^* = \mathop{\arg\min}_{\phi} -\frac{1}{N}\sum_{i=1}^N\frac{p(\theta_i)}{\tilde p(\theta_i)} K_\tau^S (x_i, x_o) \log q_{F(S(x_i), \phi)}(\theta_i)$
\State set $\tilde p(\theta):=(1-\alpha) q_{F(S(x_o), \phi^*)}(\theta) + \alpha p_{\mathrm{def}}(\theta)$
\EndFor \\
\Return $q_{F(x_o, \phi^*)}(\theta)$
\end{algorithmic}
\end{algorithm}

\section{Additional model information} \label{appendix:model_detail}

This section provides detailed information on the SLCP model, Gaussian linear model, and g-and-k based state-space model.

\textbf{SLCP model \cite{papamakarios2019sequential}.} 
The model has a complex and multimodal posterior form. Model parameters $\theta=(\theta_1,\cdots,\theta_5)$ obeying a uniform prior on a bounded rectangle as $\mathcal{U}(-3, 3)^5$. The dimension of data $x$ is 8, satisfying $p(x|\theta)=\prod_{i=1}^4 \mathcal{N}(x_{(2i-1, 2i)} | \mu(\theta), \Sigma(\theta))$, which is a two-dimensional normal distribution of four samples concatenated. The mean is defined by $\mu(\theta)=(\theta_1, \theta_2)^\top$ and the covariance matrix takes the form
\begin{equation*}
\Sigma(\theta)= \left(\begin{array}{cc}
s_1^2 & \rho s_1 s_2 \\
\rho s_1 s_2 & s_2^2 \\
\end{array}\right),
\end{equation*}
where $s_1=\theta_3^2$, $s_2=\theta_4^2$ and $\rho=\tanh(\theta_5)$.
In our experiments, the ground truth parameters are
\begin{equation*}
\theta^* = (0.7,  -2.9, -1, -0.9, 0.6),
\end{equation*}
and the observed data $x_o$ simulated from the model with ground truth parameters $\theta^*$ are
\begin{equation*}
x_o = (1.4097,\ -1.8396,\  0.8758,\ -4.4767,\ -0.1753,\ -3.1562,\ -0.6638,\ -2.7063).
\end{equation*}
The standard deviation of each component of the data $x$ generated from ground truth parameters $\theta^*$ are
\begin{equation*}
s = (1,\ 0.81,\ 1,\ 0.81,\ 1,\ 0.81,\ 1,\ 0.81),
\end{equation*}

\textbf{Gaussian linear model \citep{lueckmann2021benchmarking}.} This model utilizes a 10-dimensional Gaussian model configuration where the parameter $\theta$ represents the mean, and the covariance is fixed at $0.1\times I_{10}$. This configuration allows the examination of how algorithms manage trivial scaling of dimensionality \citep{lueckmann2021benchmarking}. The prior for $\theta$ is defined by a bounded uniform prior $\mathcal{U}(-1, 1)^{10}$. The ground truth parameters are set as
\begin{align*}
\theta^* = (-&0.9527,\ -0.1481,\ 0.9824,\ 0.4132,\ 0.9904,\\ -&0.7402,\ 0.7862,\ 0.0437,\ -0.6261,\ -0.7651),
\end{align*}
and the observed data $x_o$ simulated from the model with these parameters are
\begin{align*}
x_o = (-&0.5373,\ -0.2386,\ 0.8192,\ 0.6407,\ 0.4161,\\ -&0.0974,\ 1.1292,\ -0.0584,\ -0.9705,\ -0.9423).
\end{align*}
The standard deviation $s$ for each component of the data $x$ generated from the ground truth parameters is $\sqrt{1/10}$.

\textbf{State-space model \citep{rodrigues2020likelihood}.} This model provides a flexible framework for fitting time series datasets. Following \citet{rodrigues2020likelihood}, each parameter of the $g$-and-$k$ distribution, $\beta_t = (\beta_t^{(1)}, \ldots, \beta_t^{(4)})^\top$, is represented by its own set of state parameters, $\theta_t^{(i)}$, $i=1,\dots,4$. The system matrices, $F_t^{(i)}$ and $G_t^{(i)}$, are defined as follows:
\begin{equation*}
F_t^{(i)} = (E_2, E_6, \delta(t))^\top, \quad G_t^{(i)} =
\begin{pmatrix}
J_2 & 0_{2 \times 6} & 0_{2 \times 1} \\
0_{6 \times 2} & P_6 & 0_{6 \times 1} \\
0_{1 \times 2} & 0_{1 \times 6} & 1
\end{pmatrix}.
\end{equation*}
where 
\begin{equation*}
J_2 = \begin{pmatrix}
1 & 10^{-3} \\
0 & 1
\end{pmatrix}, \quad P_6 = \begin{pmatrix}
-1_{1 \times 5} & -1 \\
I_5 & 0_{5 \times 1}
\end{pmatrix},
\end{equation*}
$E_n = (1, 0, \cdots, 0)$ is a $n$-dimensional vector, and $\delta(t)$ is an indicator function epresenting the summer season. The evolution matrix $G_t^{(i)}$ comprises several components: a block $J_2$ to model the local linear trend of the latent level, a permutation matrix $P_6$ to capture the weekly seasonal effect, and specific terms to describe the summer effect using $\theta_{9,t}^{(i)}$. The complete model is constructed by combining $F_t^{(i)}$ and $G_t^{(i)}$ with the Kronecker product, resulting in $F_t = F_t^{(i)} \otimes I_4$ and $G_t = G_t^{(i)} \otimes I_4$, with $\theta_t = (\theta_t^{(1)}, \ldots, \theta_t^{(4)})$. The process noise is defined as $w_t\sim N (0, 10^{-5} I_{36})$, and the prior of the initial state is set as $\theta_0\sim N(0, 10^{-1} I_{36})$.

For obtaining the simulated dataset, we set the ground truth parameters as
\begin{align*}
\theta^* = (
&0.1,\ 1.0,\ 0.2,\ -0.2,\ 0.1,\ 0.1,\ -0.05,\ -0.05,\ 0.1,\ 0.1,\ 0.1,\ 0.1,\ 0.05,\ 0.05,
\\&0.05,\ 0.05,\ 0.01,\ 0.01,\ 0.01,\ 0.01,\ -0.01,\ -0.01,\ -0.01,\ -0.01,\ 0.03,\ -0.03,\ 
\\&-0.03,\ -0.03,\ -0.05,\ -0.05,\ -0.05,\ -0.05,\ 
0.2,\ 0.5,\ 0.2,\ 0.2).
\end{align*}

\begin{figure}[tp]

\begin{subfigure}[t]{1.00\textwidth}
\vspace{0pt}
    \includegraphics[width=1.00\textwidth]{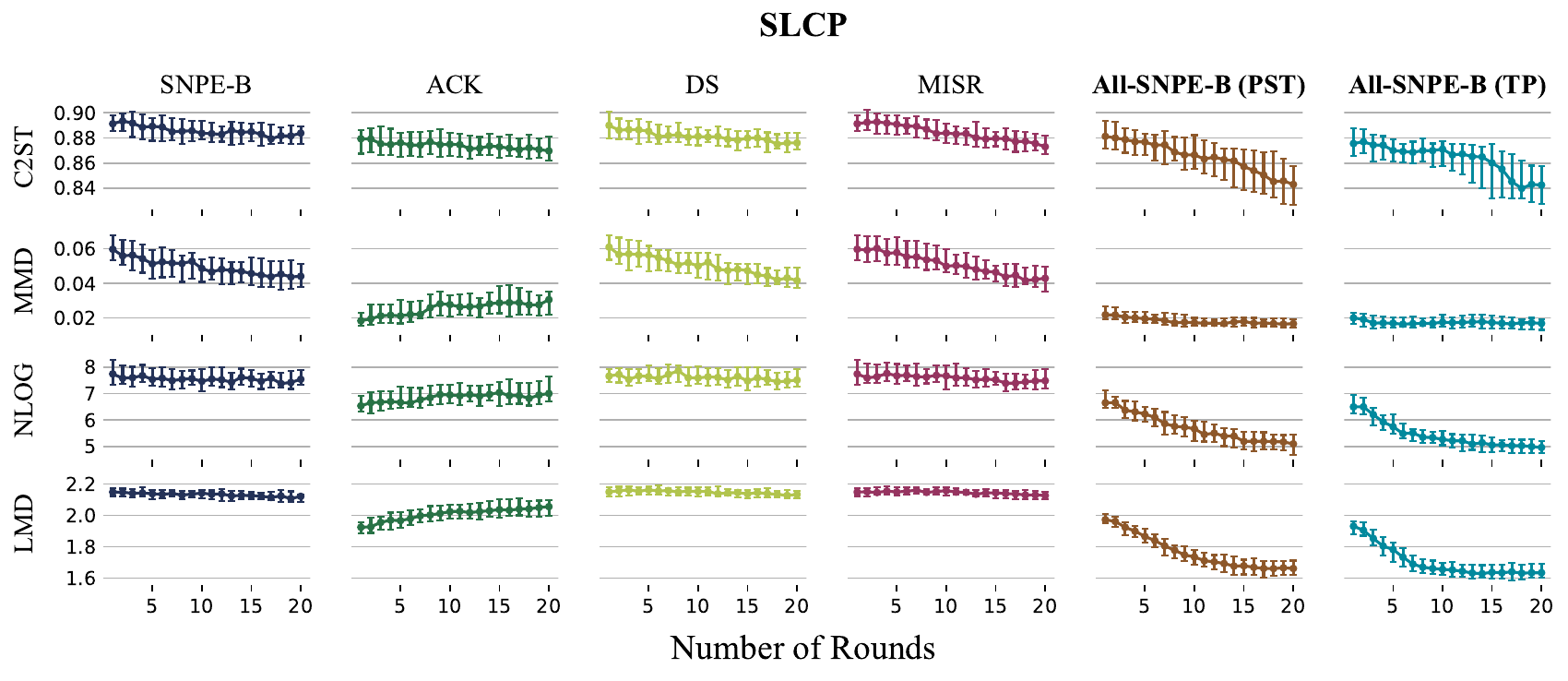}
\end{subfigure}
\hspace{-1.01\textwidth}
\begin{subfigure}[t]{0.02\textwidth}
  \vspace{5pt}
    \textbf{A}  
\end{subfigure}

\begin{subfigure}[t]{1.00\textwidth}
\vspace{0pt}
    \includegraphics[width=1.00\textwidth]{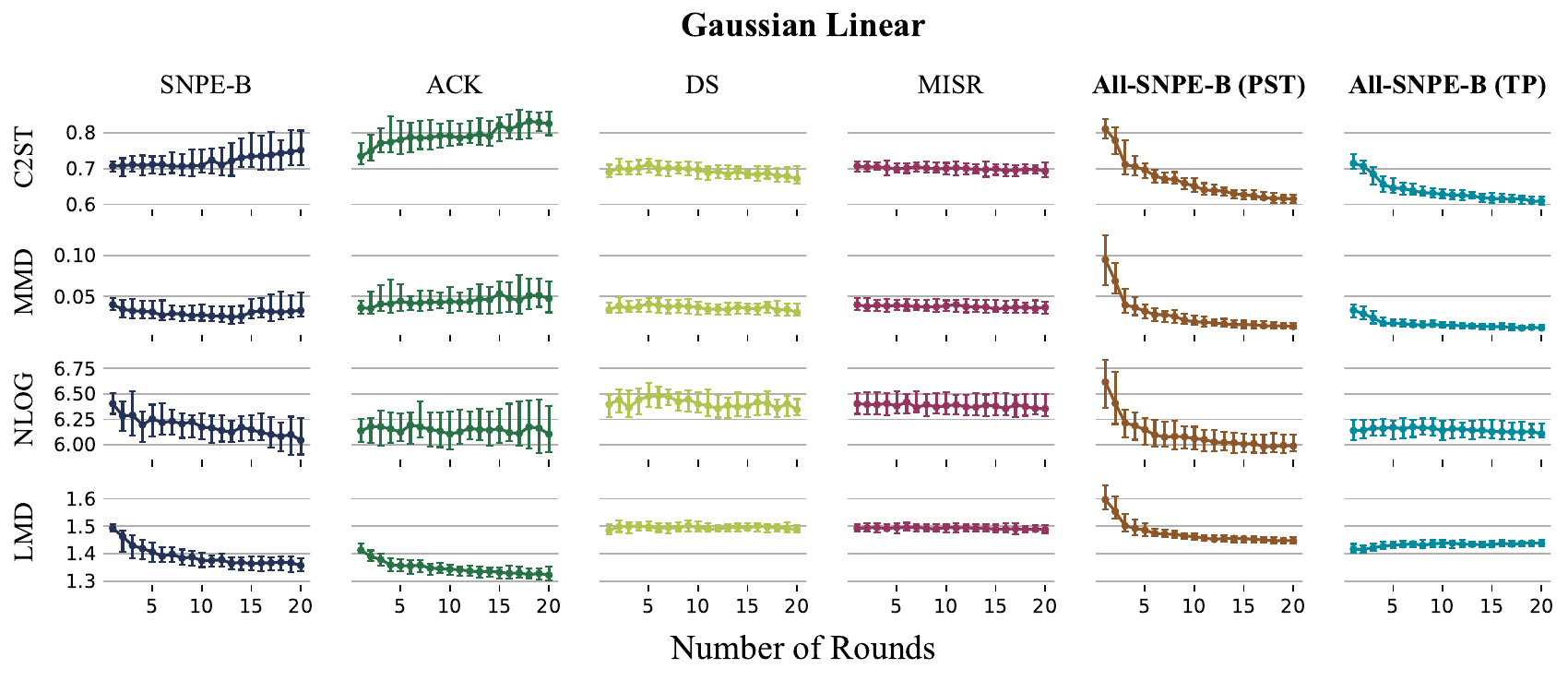}
\end{subfigure}
\hspace{-1.01\textwidth}
\begin{subfigure}[t]{0.02\textwidth}
  \vspace{5pt}
    \textbf{B}  
\end{subfigure}

\caption{\textbf{Simulation experiments on proposed strategies.} \textbf{A.} Performance on the SLCP model. \textbf{B.} Performance on the Gaussian linear model. The horizontal axis represents the round of training and the error bars represent the mean with the upper and lower quartiles.}
\label{fig:valid_appendix} 
\end{figure}

\begin{figure}[tp]
\hspace{0.02\textwidth}
\begin{subfigure}[t]{0.94\textwidth}
  \vspace{0pt}
    \includegraphics[width=1.00\textwidth]{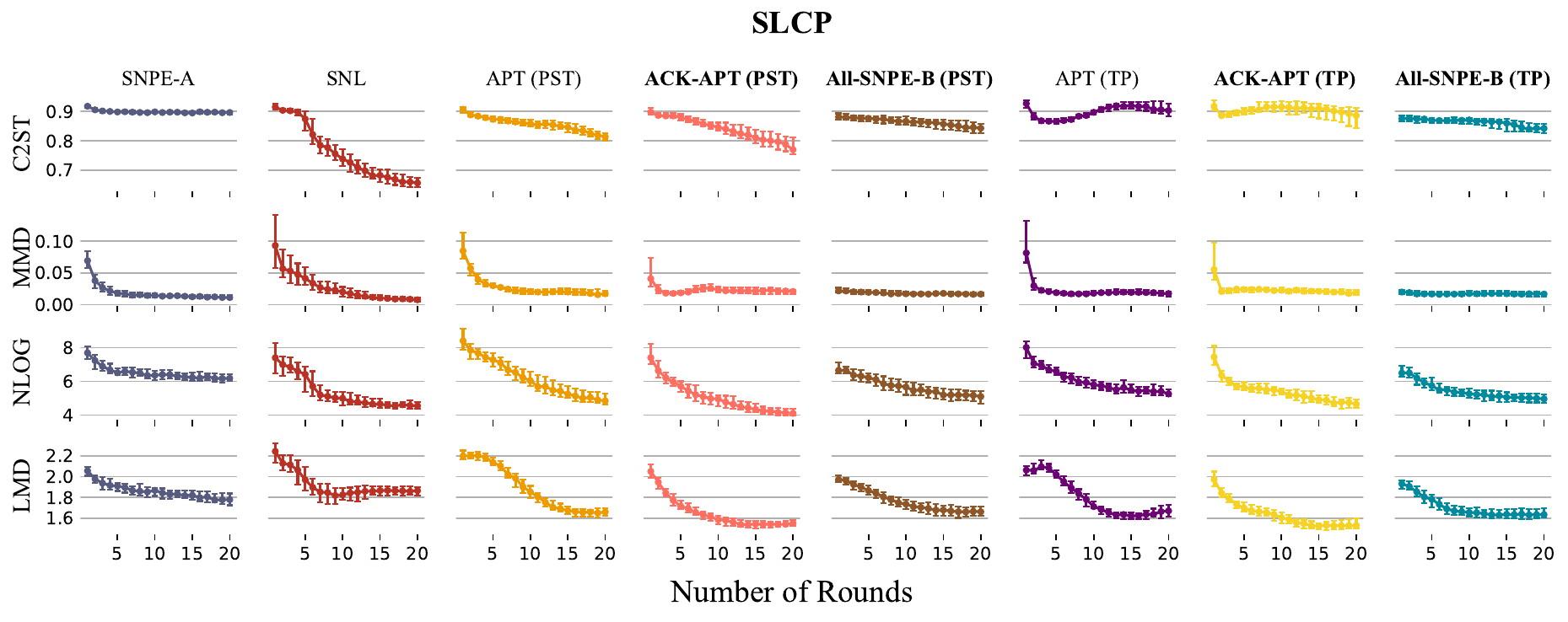}
\end{subfigure}
\hspace{-0.96\textwidth}
\begin{subfigure}[t]{0.02\textwidth}
  \vspace{5pt}
    \textbf{A}  
\end{subfigure}

\hspace{0.02\textwidth}
\begin{subfigure}[t]{0.94\textwidth}
  \vspace{0pt}
    \includegraphics[width=1.00\textwidth]{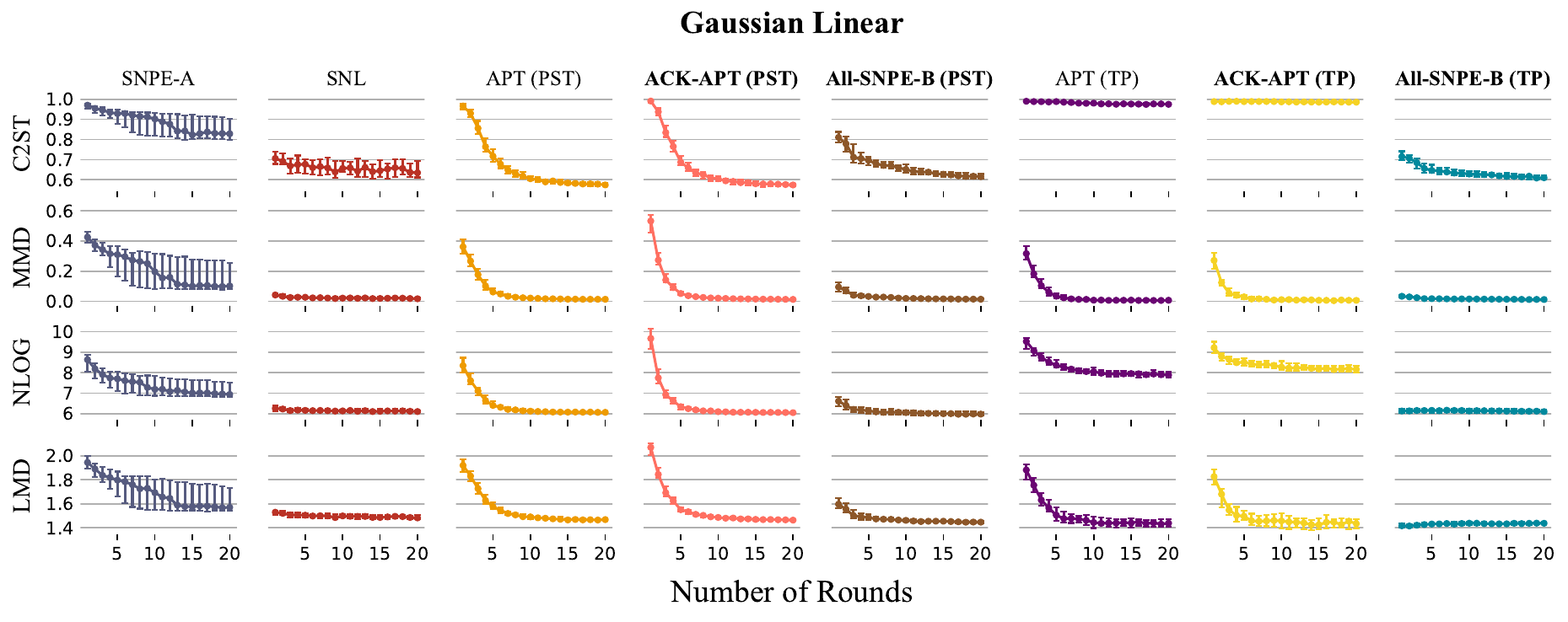}
\end{subfigure}
\hspace{-0.96\textwidth}
\begin{subfigure}[t]{0.02\textwidth}
  \vspace{5pt}
    \textbf{B}  
\end{subfigure}
\caption{\textbf{Our proposed strategies (All-SNPE-B and ACK-APT) versus other methods.}  
\textbf{A.} Performance on the SLCP model. 
\textbf{B.} Performance on Gaussian linear model. 
The horizontal axis represents the round of training and the error bars represent the mean with the upper and lower quarterlies.}
\label{fig:compare_appendix}
\end{figure}

\section{Additional experimental results} \label{appendix:exp_detail}
This section provides the visualization of posterior estimates with our proposed method on the four models.
In Figure \ref{fig:post_compare}, we showcase the approximate posterior marginal density plots generated by the SMC-ABC method, which serves as a reference posterior for evaluating the accuracy of other methods. We compare them with the original SNPE-B and APT methods, as well as our proposed improved methods (All-SNPE-B and ACK-APT), all implemented using the same random seed settings. The results demonstrate that our proposed methods effectively capture the high-density regions of the posterior distribution. Furthermore, in comparison to the original methods, our approaches provide a more precise approximation of the posterior distribution. Notably, the original SNPE-B method fails to accurately approximate the posterior marginal density for the SLCP and Lotka-Volterra models.
\begin{figure}[htbp]
\hspace{0.02\textwidth}
\begin{subfigure}[t]{0.02\textwidth}
  \vspace{1pt}
    \textbf{A}  
\end{subfigure}
\begin{subfigure}[t]{0.9\textwidth}
  \vspace{0pt}
    \includegraphics[width=\textwidth]{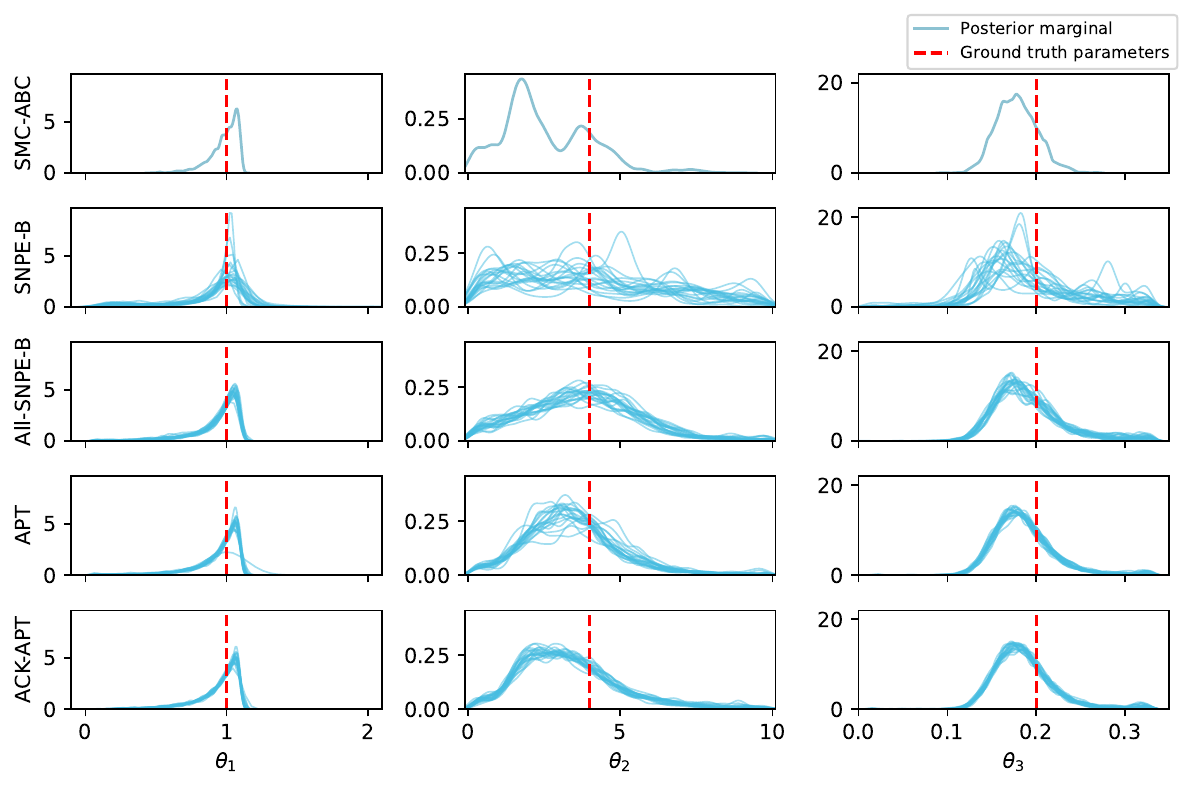}
\end{subfigure}

\hspace{0.02\textwidth}
\begin{subfigure}[t]{0.02\textwidth}
  \vspace{1pt}
    \textbf{B}  
\end{subfigure}
\begin{subfigure}[t]{0.9\textwidth}
  \vspace{0pt}
    \includegraphics[width=\textwidth]{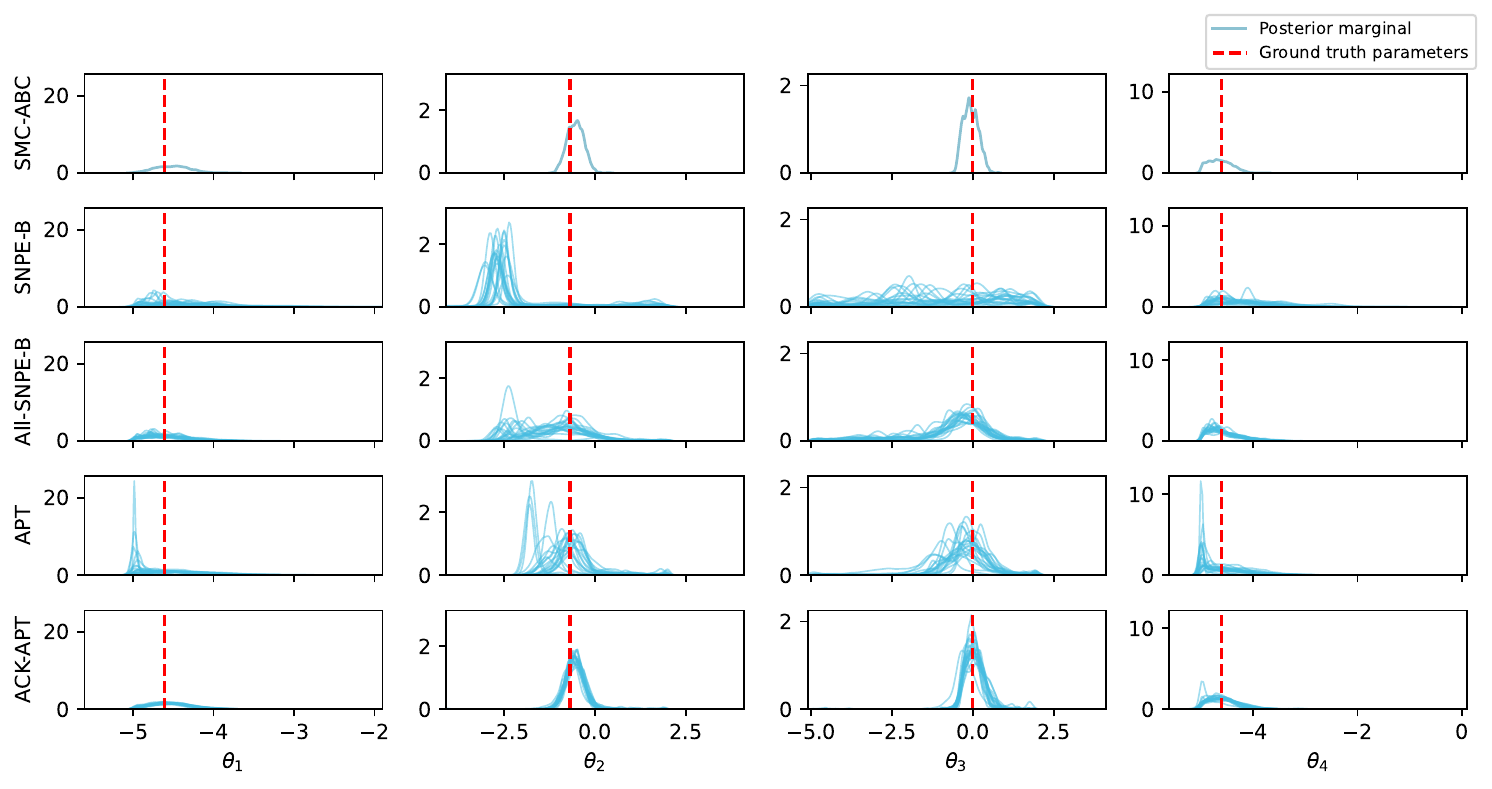}
\end{subfigure}
\end{figure}

\setcounter{figure}{8}
\begin{figure}
\hspace{0.02\textwidth}
\begin{subfigure}[t]{0.02\textwidth}
  \vspace{1pt}
    \textbf{C}  
\end{subfigure}
\begin{subfigure}[t]{0.9\textwidth}
  \vspace{0pt}
    \includegraphics[width=\textwidth]{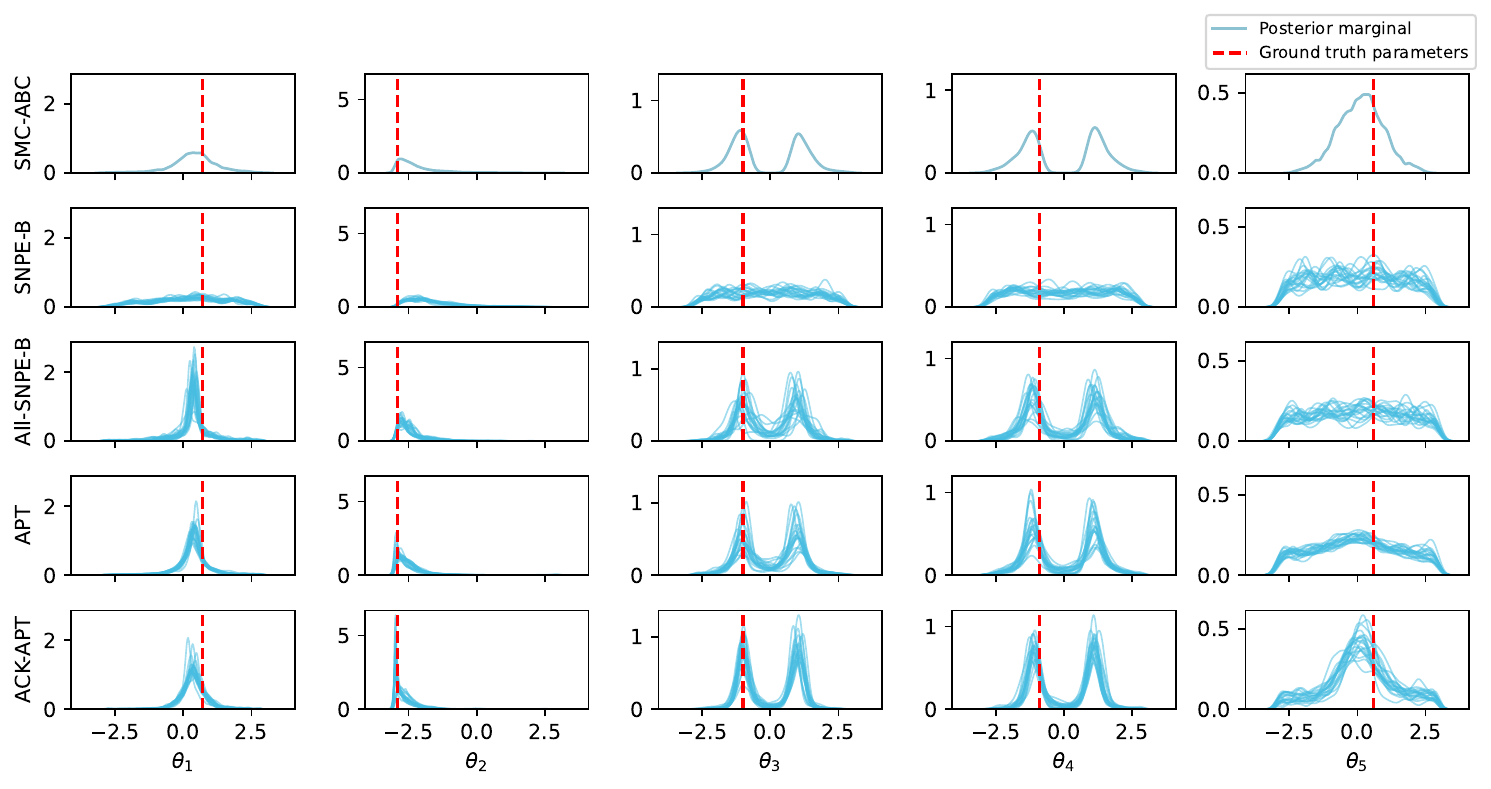}
\end{subfigure}

\hspace{0.02\textwidth}
\begin{subfigure}[t]{0.02\textwidth}
  \vspace{1pt}
    \textbf{D}  
\end{subfigure}
\begin{subfigure}[t]{0.9\textwidth}
  \vspace{0pt}
    \includegraphics[width=\textwidth]{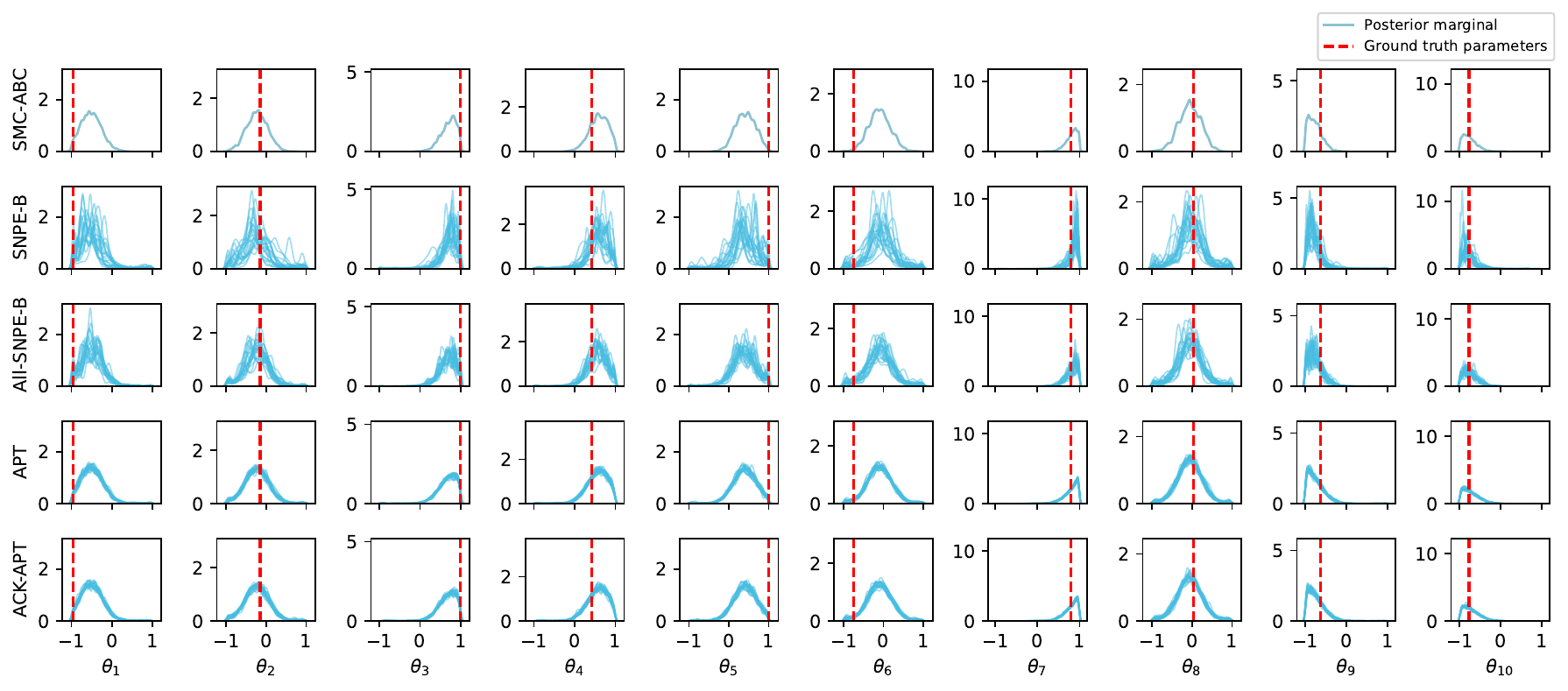}
\end{subfigure}

\caption{\textbf{Comparison of posterior densities evaluating at $x_o$. } 
\textbf{A.} M/G/1 model. 
\textbf{B.} Lotka-Volterra model. 
\textbf{C.} SLCP model. 
\textbf{D.} Gaussian linear model. 
Our proposed methods, All-SNPE-B and ACK-APT, demonstrate superior performance in accurately capturing the high-density regions of the posterior distributions, particularly for the SLCP and Lotka-Volterra models where the original SNPE-B method falls short.}
\label{fig:post_compare} 
\end{figure}

\end{document}